\theoremstyle{plain}
\newtheorem{theorem}{Theorem}[section]
\theoremstyle{definition}
\newtheorem{definition}[theorem]{Definition}
\theoremstyle{remark}
\title{A Physics-preserved Transfer Learning Method for Differential Equations}
\author{%
Hao-Ran Yang \\
  School of Mathematics\\
  Sun Yat-Sen University\\
  Guangzhou, China \\
  \texttt{yanghr26@mail2.sysu.edu.cn} \\
  \And
  Chuan-Xian Ren\thanks{Corresponding author.} \\
  School of Mathematics \\ 
  Sun Yat-Sen University \\
  Guangzhou, China \\
  \texttt{rchuanx@mail.sysu.edu.cn} 
}
\begin{document}

\maketitle

\begin{abstract}
    While data-driven methods such as neural operator have achieved great success in solving differential equations (DEs), they suffer from domain shift problems caused by different learning environments (with data bias or equation changes), which can be alleviated by transfer learning (TL). However, existing TL methods adopted in DEs problems lack either  generalizability in general DEs problems or  physics preservation during training. In this work, we focus on a general transfer learning method that adaptively correct the domain shift and preserve physical relation within the equation. Mathematically, we characterize the data domain as product distribution and the essential problems as distribution bias and operator bias. A Physics-preserved Optimal Tensor Transport (POTT) method that simultaneously admits generalizability to common DEs and physics preservation of specific problem is proposed to adapt the data-driven model to target domain, utilizing the pushforward distribution induced by the POTT map. Extensive experiments in simulation and real-world datasets demonstrate the superior performance, generalizability and physics preservation of the proposed POTT method.
\end{abstract}

\section{Introduction}
\label{sec:intro}
Many scientific problems, such as climate forecasting~\cite{wu2023interpretable, verma2024climode} and industrial design~\cite{zhou2024ai, borrel2024sound}, are modeled by differential equations (DEs). In practice, DEs problems are usually discretized and solved by numerical methods since analytic solutions are hard to obtain for most DEs. However, traditional numerical solvers struggle with expensive computation cost and poor generalization ability. Recently, dealing DEs with deep neural network has attracted extensive attention. These methods can be roughly divided into two categories: physics-driven and data-driven. Optimizing neural networks with objective constructed by  exact equations, physics-driven methods such as Physics-Informed Neural Networks~\cite{raissi2019physics, meng2024physics} have great interpretability, but suffer from the poor generalization capability across equations and the hard requirement of exact formulation of DEs. In contrast, Data-driven methods such as neural operator~\cite{lu2021learning, li2021fourier} typically take the alterable function in the equations as input data and solution function as output data. Their generalization capability are markedly improved as they can cope with a family of equations rather than one. 

However, the performance of data-driven methods are highly dependent on identical assumption of training and testing environments. If the testing data comes from different distribution, model performance may degrade significantly. In practice, however, applying model to different data distributions is a common requirement, e.g. from simulation data to experiment data, cross-region model application. While it is often hard to collect sufficient data from a new data domain to train a new model, transfer learning (TL) that aims to transfer model from source domain with plenty of data to target domain with inadequate data, is widely adopted in real-world applications~\cite{zhang2023skilful}. However, TL methods for DEs problems are still unexplored. 
\begin{table*}[t]
    \caption{Simulation datasets used for experiments. Figures in $\mathcal{D}_1$, $\mathcal{D}_2$ and $\mathcal{D}_3$ are examples of (input,output) pairs from different domains in the transfer tasks. For 1-d curve plot, the filled regions represent the areas between the curve and the x-coordinate.  For 2-d surface plot, the pixel value at each image pixel corresponds to the function value at the sampling point. Brighter color indicates larger value. Details are provided in Appendix~\ref{app:simulation}.}
    \label{tab:samples}
    \begin{center}
    \begin{sc}
    \begin{tabular}{c|ccc}
    \toprule
    Equations  & $\mathcal{D}_1$ & $\mathcal{D}_2$ & $\mathcal{D}_3$ \\ 
    \midrule
    \text{Burgers' equation} & \multirow{3}{*}{\includegraphics[width=0.15\textwidth]{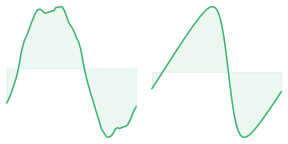}} &  \multirow{3}{*}{\includegraphics[width=0.15\textwidth]{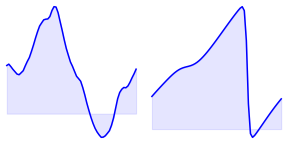}} &  \multirow{3}{*}{\includegraphics[width=0.15\textwidth]{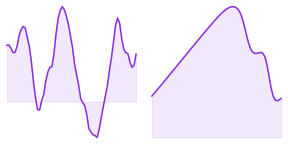}}\\ 
    $u_t + u u_x = \nu u_{xx},$ & & & \\
    $\mathcal{G}_{\theta}: u(x,0) \rightarrow u(x,1)$ & & & \\
    \midrule
    \text{Advection equation} &  \multirow{3}{*}{\includegraphics[width=0.15\textwidth]{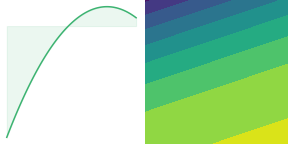}} & \multirow{3}{*}{\includegraphics[width=0.15\textwidth]{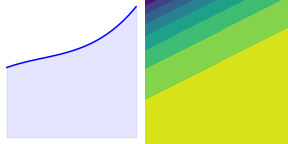}} & \multirow{3}{*}{\includegraphics[width=0.15\textwidth]{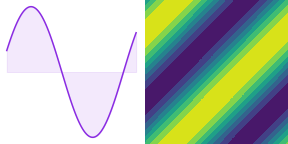}}\\
    $u_t + \nu u_x = 0,$ & & & \\
    $\mathcal{G}_{\theta}: u(x,0) \rightarrow u(x,t)$ & & &\\
    \midrule
    \text{Darcy flow}&  \multirow{3}{*}{\includegraphics[width=0.15\textwidth]{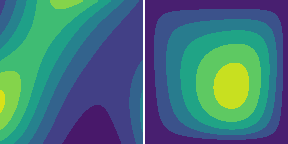}} &  \multirow{3}{*}{\includegraphics[width=0.15\textwidth]{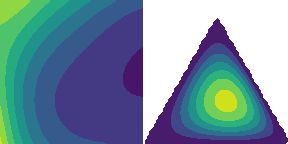}} &  \multirow{3}{*}{\includegraphics[width=0.15\textwidth]{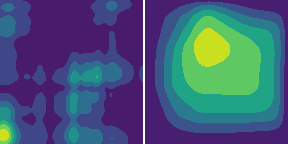}}\\ 
    $\nabla \cdot (k(x) \nabla u(x)) = 1,$  & & & \\
    $\mathcal{G}_{\theta}: k(x) \rightarrow u(x)$ & & & \\
    \bottomrule
    \end{tabular}
    \end{sc}
    \end{center}
\end{table*}

In this work, we carefully analyze the transfer learning settings in DEs problems and modelize the essential problem as distribution bias and operator bias. Given such perspective, we fully investigate the existing TL methods used in DEs problems. Technically, they can be summarized as three types. 
\textbf{(1) Analytic methods}~\cite{desai2022oneshot} induce an analytic expression about model parameters to achieve adaptation with a few samples. Nevertheless, they are only feasible in very few problems with nice properties and hence not a general methodology.
\textbf{(2) Finetuning}~\cite{subramanian2023towards} the well-trained source model by target data. It is widely-used in DEs problems with domain shift due to its simplicity. Nevertheless, when the amount of available target data is limited, directly correcting operator bias by finetuning is insufficient.
\textbf{(3) Domain Adaptation (DA)}~\cite{wang2024deep} methods developed in other areas such as computer vision. They typically align feature distributions of source and target domains and remove the domain-specific information, so the aligned feature together with the model trained from them are domain-invariant. However, the physical relation of DEs may not necessarily be valid in the aligned feature space. 
Among these methods, analytic methods and finetuning directly correct the operator bias while the DA methods foruc on the distribution bias correction. But all these methods have certain limitations. Therefore, a general model transfer method that can preserve the physical relation is worth exploring.

Our idea is to characterize the target domain with physics preservation and then fully correct the operator bias. Briefly speaking, we propose the Physics-preserved Optimal Tensor Transport (POTT) method to learn a physics-preserved optimal transport map between source and target domains. Then the target domain with the physical relations are characterized by the pushforward distribution, which enables a more comprehensive training for model transfer learning. Thus, the model's generalization performance on target domain can be largely improved even when only a small number of target samples are available for training. 
To encourage the POTT map to characterize target distribution in a physics-preserved way, we introduce a problem-specific physical regularization to the OT problem, which is derived from available physical prior of the problem. In general cases without physical prior, the regularization term is formulated as relation between marginal pushforward distribution.    
Our contributions are summarized as follows:
\begin{itemize}[leftmargin=0.75cm]
\item A detailed analysis of transfer learning for DEs problems is presented, based on which we propose a feasible transfer learning paradigm that simultaneously admits generalizability to general DEs problems and physics preservation of specific problems. 
\item We propose POTT method to adapt the data-driven model to target domain with the pushforward distribution induced by the POTT map. A dual optimization problem is formulated to explicitly solve the optimal map. The consistency property between the solution and the ideal optimal map is presented.
\item Extensive evaluation and analysis experiments on both simulation and real-world datasets are conducted. POTT shows superior performance on different types of equations with transfer tasks of varying difficulties. Intuitive visualization analysis further supports our discussion on the physics preservation of POTT. 
\end{itemize}

\section{Preliminary}
\label{sec:preliminary}
\textbf{Data-driven methods for DEs problems.} In DEs problems, data-driven methods aim to learn functional maps from data distributions. DeepONet~\cite{lu2021learning} is proposed based on the universal approximation theorem~\cite{chen1995universal}. Then the MIONet~\cite{jin2022mionet} further extends DeepONet to problems with multiple input functions and Geom-DeepONet~\cite{he2024geom} enables DeepONet to deal with parameterized 3D geometries. Differently, FNO~\cite{li2021fourier} is constructed in the insight of  approximating integration in the Fourier domain. Geo-FNO~\cite{li2023fourier} extends FNO to arbitrary geometries by domain deformations. F-FNO~\cite{tran2023factorized} enhances FNO by employing factorization in the Fourier domain. Recently, transformer have also been used to construct neural operators~\cite{li2023transformer, hao2023gnot, li2023scalable, wu2024transolver}.  Although having achieved great success, these data-driven methods induce a common issue: they are highly dependent on identical assumption of training and testing environments. If the testing distribution differs, the performance of the neural operators will significantly degrade.

\textbf{Transfer learning. }Most of the transfer learning methods are proposed for Unsupervised Domain Adaptation (UDA) with classification task. They align the feature distributions of source and target domain by distribution discrepancy measurement~\cite{long2015learning}, domain adversarial learning~\cite{ganin2016domain, chen2022reusing}, etc. Recent methods~\cite{chen2021representation, nejjar2023dare, yang2025cod} further extend DA to regression settings with continuous variables. For DEs problems, analytic transfer methods~\cite{desai2022oneshot} are presents for specific equations; finetuning~\cite{xu2023transfer, subramanian2023towards} and DA methods~\cite{goswami2022deep, wang2024deep} are applied in various tasks. However, there isn't a general physics-preserved transfer learning method developed for DEs problems.

\textbf{Optimal transport (OT). }OT has been quite popular in machine learning area~\cite{cuturi2013sinkhorn, courty2016optimal}. The most widely known OT problems are the Monge problem and the Kantorovich problem defined as follows:
\begin{align}
    M(P^s,P^t) = &\inf_{T_{\#}P^s = P^t} \int_{\Omega^s} c(x,T(x))\ dP^s(x) \label{eq:monge}\\
    K(P^s,P^t) = &\inf_{\pi \in \Pi(P^s,P^t)} \int_{\Omega^s \times \Omega^t} c(x, y) \ d\pi(x, y), \label{eq:kanto}
\end{align}
where $c(x,y)$ denotes the cost of transporting $x\in \Omega^s$ to $y\in \Omega^t$, $T:\ \Omega^s\to \Omega^t$ denotes the transport map, $T_{\#}P^s $ denotes the pushforward distribution, and $\Pi(P^s,P^t)$ denotes the set of joint distributions with marginal $ P^s$ and $P^t$. The Monge problem aims at a transport map $\mathcal{T}$ that minimize the total transport cost, called the Monge map. However, usually the solution of the Monge problem does not exist, so the relaxed Kantorovich poblem is more widely used. The Kantorovich problem can be solved as a linear programming problem. With an entropic regularization added, it can be fastly computed via the Sinkhorn algorithm~\cite{cuturi2013sinkhorn}. Moreover, if $ \pi^*$ takes the form $ \pi^*=[id,\mathcal{T}]_\# P^s \in \Pi(P^s,P^t) $, then $\mathcal{T}$ is the Monge map. However, these optimization method don't scale well to large scale data  domain and can't handle continuous probability distributions~\cite{seguy2018large}. To deal with these limitations, neural OT methods are developed~\cite{seguy2018large, daniels2021score, korotin2023neural}. They typically train the neural network to directly approximate the OT map via constructing objective function by various OT problems~\cite{fan2023neural, gushchin2023entropic, asadulaev2024neural}. 

\section{Analysis and Motivation}
\subsection{Problem Formulation and Notations}
Now we formally formulate the transfer learning settings for DEs problems. Consider two function space $\mathcal{D}_k $, $\mathcal{D}_u $ with elements $k:\Omega_k\to \mathbb{R} $, $u:\Omega_u\to \mathbb{R} $. Denote the product spaces as  $\mathcal{D} = \mathcal{D}_k \times \mathcal{D}_u $ and $\Omega = \Omega_k \times \Omega_u$. Suppose there exist physical relations within the product space $\mathcal{D}$, which can be characterized as the following two forms:
\begin{align}
    \mathcal{F} (k,u) = 0 & \ \text{(Equation form)} \label{eq:equation_rela} \\ 
    \mathcal{G}(k)=u & \ \text{(Operator form)} \label{eq:operator_rela}
\end{align}
Obviously, relation Eq.~\eqref{eq:operator_rela} is the explicit form of the implicit operator mapping determined by Eq.~\eqref{eq:equation_rela}, whose existence is theoretically guaranteed under some conditions such as the implicit function theorem. Once the operator $\mathcal{G}:\mathcal{D}_k \to \mathcal{D}_u$ is solved, we can predict the desired physics quantities $u$ for a group of $k$. However, directly solving the implicit function from Eq.~\eqref{eq:equation_rela} is often extremely difficult. In such cases, fitting $\mathcal{G}$ by neural network with collected data set $\{(k,u)\}$ provides a practical way for numerical approximation, which is exactly the goal of data-driven methods. Here we slightly abuse the notations $k$ and $u$ to represent both functions and their discretized value vectors. 

An essential limitation is that the learning of the operator network $\hat{\mathcal{G}}$ depends heavily on the distribution of  collected data. Let $P^s, P^t\in \mathcal{P}_{\mathcal{D}}$ be two product distributions supported on the source and target domain  $\mathcal{D}^s, \mathcal{D}^t\subset \mathcal{D}$, respectively. Then the operator trained from them, denoted as $\hat{\mathcal{G}^s}$ and $\hat{\mathcal{G}^t}$, are in fact the approximations of  $\mathcal{G}^s := \mathcal{G}\vert _{\mathcal{D}^s}$ and $\mathcal{G}^t := \mathcal{G}\vert _{\mathcal{D}^t}$. When distribution shift occurs, the operator relation also differs. So the transfer learning problem for DEs can be modelized as 
\begin{equation}
    \label{eq:bias_modelized}
    \begin{aligned}
          P^s(k,u) &\neq  P^t(k,u),  & \ \text{(Distribution bias)} \\ 
        \Longrightarrow\ \ \ \  \ \ \  \mathcal{G}^s &\neq \mathcal{G}^t. & \ \text{(Operator bias)}
    \end{aligned}
\end{equation}
In these situations, the model performance generally degrades if $\hat{\mathcal{G}^s}$ is directly applied to $\mathcal{D}^t$. While collecting sufficient training data is difficult in many applications scenarios, a common requirement is to transfer $\mathcal{G}^s$ to $\mathcal{D}^t$ with a few target data available.
Formally, given $\hat{\mathcal{D}^s} = \{(k^s_i,u^s_i)\}_{i=1}^{n^s}$, $\hat{\mathcal{D}^t} = \{(k^t_j,u^t_j)\}_{j=1}^{n^t}$, with $n^t\ll n^s$, the task is to transfer source model $\hat{\mathcal{G}^s}$ to target domain $\mathcal{D}^t$ and approximate  $\mathcal{G}^t$, i.e.~to correct the operator bias. An intuitive illustration is shown in Fig.~\ref{fig:FC}.

\subsection{Methodology Analysis}
\label{sec:method_ang}
\begin{wrapfigure}{r}{0.5\textwidth}
    \centering
    \vspace{-0.4cm}
    \includegraphics[width=\linewidth]{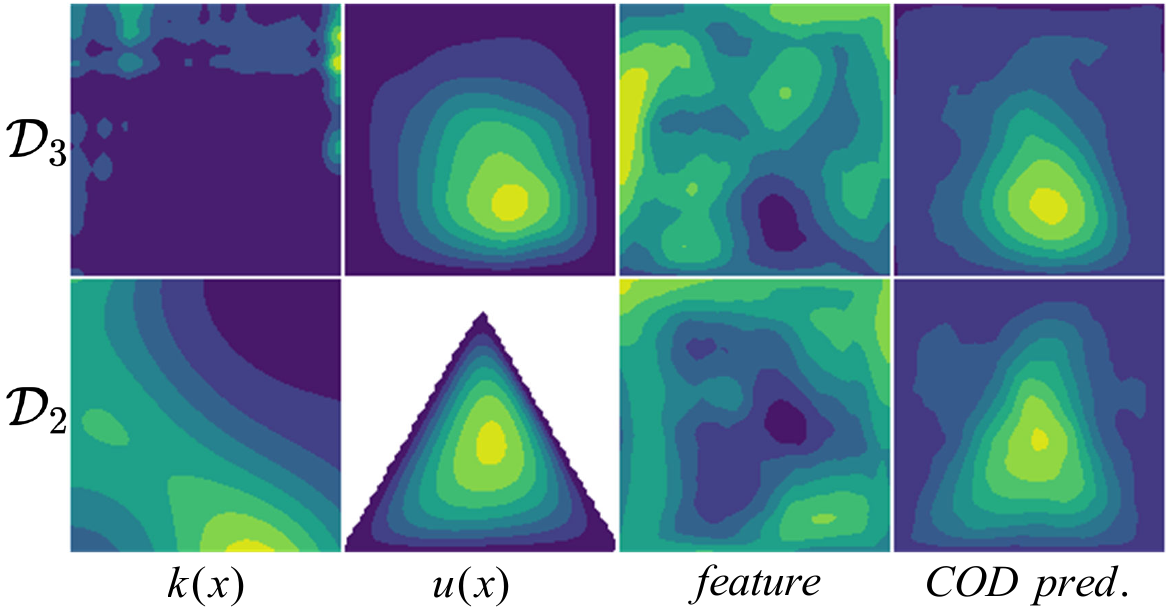} 
    \vspace{-0.5cm}
    \caption{Visualization of DA method (COD) in task $\mathcal{D}_3 \to \mathcal{D}_2$ on Darcy flow. The 1st and 2nd columns present the input $k(x)$ and output $u(x)$ sample pairs of $\mathcal{D}_3$ and $\mathcal{D}_2$. The 3rd column visualizes their feature maps from the aligned distributions, which are forced to be analogous but lacks clear structure explicit physical meaning. It is unclear whether they retain the correct physical information. The prediction shown in the 4th column verifies that the physical structures of $u$ are not fully preserved. }
    \vspace{-0.3cm}
    \label{fig:DA}
\end{wrapfigure}

Based on problem~\ref{eq:bias_modelized}, existing transfer learning methods either directly correct the operator bias or indirectly correct the operator bias by aligning the feature distributions, all of which are subject to certain limitations.

\textbf{Analytic methods} directly correct the operator bias by deriving analytic  expressions
\begin{align}
    \hat{\mathcal{G}^t} = \mathcal{H}_{a}(\hat{\mathcal{G}^s}, \hat{\mathcal{D}^t}),
\end{align}
where $\mathcal{H}_{a }$ denotes the ideal analytic formulation. Although exhibiting excellent interpretability, they are limited to problems with nice property and sufficient  priors such as the explicit form of the equation. So they can only be applied to  few problems with well-behaved equations. 

\textbf{Finetuning by target domain data} directly corrects the operator bias by only further training the source model with collected target data:
\begin{align}
    \hat{\mathcal{G}^t} = \min_{\mathcal{G}}  \mathcal{L}_\mathrm{task}(\hat{\mathcal{D}^t};\hat{\mathcal{G}^s}), 
\end{align}
where $\mathcal{L}_\mathrm{task}$ denotes the task-specific training loss. It does not actively and fully leverage the knowledge of source and target domain data. When the amount of available target data is limited, the predictions of target samples exhibit characteristics similar to the source samples since inadequate data is insufficient for model transfer, as discussed in Sec.~\ref{sec:experiment}.

\textbf{Distribution alignment methods from DA} indirectly correct the operator bias by aligning the feature distributions. These methods typically aim to learn a feature map and a corresponding feature space in which the distance between source and target feature distributions is minimized. Then a predictor trained by the source domain features can be expected to perform well on target domain features: 
\begin{equation}
    \begin{aligned}
        g^* = \min_{g}\ \mathrm{dist}(g_\# P^s, g_\# P^t), \ \ \ \ \ \ 
        \hat{\mathcal{G}}^t = \min_{\mathcal{G}}\  \mathcal{L}_\mathrm{task}(g^*(\hat{\mathcal{D}}^s)\cup g^*(\hat{\mathcal{D}}^t);\hat{\mathcal{G}^s}),
    \end{aligned}
\end{equation}
where $\mathrm{dist}(\cdot,\cdot)$ denotes a measurement of distribution discrepancy, g is the learned feature map, $g_\#P^s, g_\#P^t$ are the pushforward feature distributions. 
DA method is purely data-driven without the need of physical priors so it is a general methodology that can be used in most scenarios. However, the two feature distributions are aligned by removing the domain specific knowledge so the domain invariant representations are obtained. In other words, the aligned feature distribution may lose some domain specific physical relations of both domain. As shown in Fig.~\ref{fig:DA}, the features of source and target samples are analogous but confused. It is unclear whether the physical relations are preserved, which is also indicated by the output of learned $\hat{\mathcal{G}}^t$.

\textbf{Motivation of POTT.} Generally,  directly correcting operator bias by finetuning only partially transfers with limited target data, while indirectly operator bias via feature distribution alignment may distort the physical relations of the DEs problem. Therefore, we propose to correct the operator bias by characterizing the target domain with physics preservation and then fully transferring $\hat{\mathcal{G}}^t$ to $\mathcal{D}^t$:
\begin{equation}
    \label{eq:general_method}
    \begin{aligned}
        \hat{\mathcal{D}}^r = \mathcal{H}_{c}(\hat{\mathcal{D}}^s, \hat{\mathcal{D}}^t, \mathcal{R}), \ \ \ \ \ 
        \hat{\mathcal{G}}^t = \min_{\mathcal{G}}\  \mathcal{L}_\mathrm{task}(\hat{\mathcal{D}}^r\cup \hat{\mathcal{D}}^t;\hat{\mathcal{G}^s}),
    \end{aligned}
\end{equation}
where $\mathcal{H}_{c}$ characterizes $\mathcal{D}^t$ by collected dataset $\hat{\mathcal{D}}^s $, $\hat{\mathcal{D}}^t $ and the physical regularization $ \mathcal{R}$. 

\begin{figure*}[t]
    \begin{center}
    \vspace{-0.2cm}
    \centerline{\includegraphics[width=\textwidth]{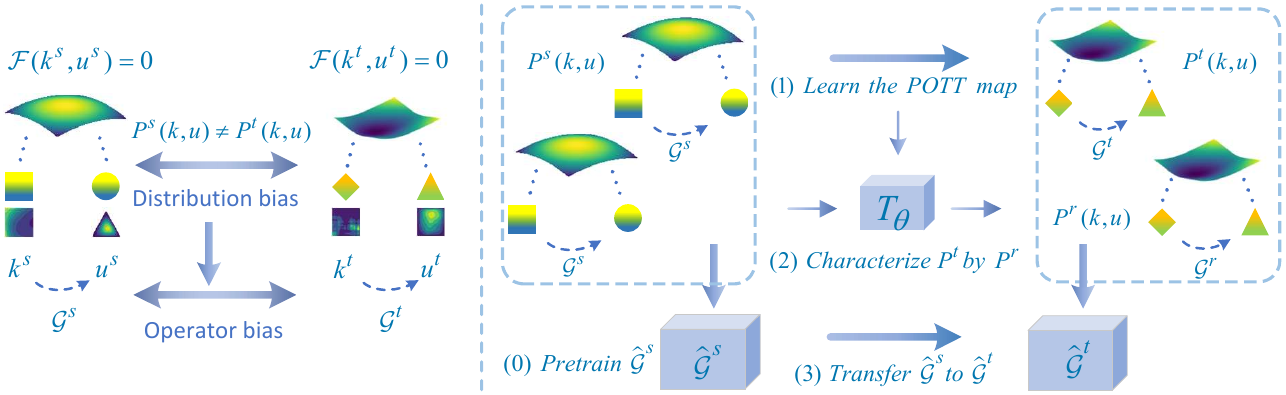}}
    \end{center}
    \vspace{-0.4cm}
    \caption{Illustration of POTT. 
    \textbf{Left:} Illustration of problem formulation. The distribution bias leads to the bias of physics, i.e.~the operator bias. The goal is to correct the operator bias. 
    \textbf{Right:} POTT correct the operator bias by characterizing ${\mathcal{D}^t}$ in a physics-preserved way. (0) Before the model transfer process, source model $\hat{\mathcal{G}^s}$ is pretrained with sufficient source data. (1) The POTT map $T_\theta$ between source and target domain is learned. (2) The target distribution $P^t$ is characterized by the pushforward distribution $P^r$. (3) $\hat{\mathcal{G}^s}$ is transferred to $\hat{\mathcal{G}^t}$ with $\hat{\mathcal{D}^t}$ and $\hat{\mathcal{D}^r}$.}
    \label{fig:FC}
    \vspace{-0.3cm}
\end{figure*}
\section{POTT Method}
The major obstacle in Eq.~\eqref{eq:general_method} is to construct the $\mathcal{H}_{c}$. A practical way is to learn a transformation $\mathcal{T}$ between $P^s$ and $P^t$, so the target distribution $P^t$ can be characterized by the pushforward distribution $P^r  = \mathcal{T}_{\#}P^s $, i.e. $\mathcal{H}_{c}(\cdot) = \mathcal{T}(\hat{\mathcal{D}}^s; \hat{\mathcal{D}}^t, \mathcal{R})$. 
Note that the exact corresponding relations between samples from $P^s$ and $P^t$ is unknown, so it is impractical to train $\mathcal{T}$ by traditional supervised learning. In other words, $\mathcal{T}$ shall be trained via an unpaired sample transformation paradigm.  
In OT theory, an optimal transport map between two distributions is the solution of an OT problem and the computation of the OT problem does not need paired samples. Therefore, a natural idea is to \textbf{model the ideal map $\mathcal{T}$ by an OT map between $P^s$ and $P^t$}. In the following sections we regard the desired map as the OT map $\mathcal{T}$, distinguishing from map $T$ that is not necessary optimal. 

\subsection{Formulation of POTT}
\label{sec:form_POTT}
A brief review of OT problem is provided in Sec.~\ref{sec:preliminary}. As our purpose is to characterize $P^t$ by $P^r = T_{\#}P^s $, we focus on the Monge problem Eq.~\eqref{eq:monge}. Moreover, the physical relation in DEs can be regarded as relation between $P_{k }$ and $P_{ u}$. Thus, a more reasonable perspective is to consider an OT problem between two product distributions,  known as the Optimal Tensor Transport (OTT) problem. In this perspective, we propose the physics-preserved optimal tensor transport (POTT) problem: 
\begin{definition}[POTT]
    Given two product distributions $P^s ,P^t \in \mathcal{P}_{\mathcal{D}_k \times \mathcal{D}_u} $, define the physics-preserved optimal tensor transport (POTT) problem as:   
    \begin{align}
        \label{eq:pott}
        \inf_{T_{\#}P^s = P^t}&\ \int_{\mathcal{D}^s} c\left((k,u),T(k, u)\right)dP^s + \mathcal{R}(T),
    \end{align}
    where $T = (T_k, T_u) = (T\vert_{\mathcal{D}_k}, T\vert_{\mathcal{D}_u})$, $\mathcal{R}(T) = \mathcal{R}(T_k, T_u)$ is the physical regularization. 
\end{definition}

Specifically, when the pushforward distribution $P^r$ perfectly matches the target distribution $P^t$, the physical relation within is automatically obtained. However, it is hard to achieve in practice and $P^r$ should be regarded as an approximation or a disturbance of $P^t$. Then we expect $P^r$ to approach $P^t$ in a physics-preserved way at least, which relies on the physical regularization $\mathcal{R}$. The construction of $\mathcal{R}$ depends on the specific problem. Here we provide two basic ideas for example:
\begin{itemize}[leftmargin=0.5cm]
    \item If some physical priors of the problem are available, $\mathcal{R}$ can be derived from the priors. For example, when the weather forecasting problem is modelized as an advection partial differential equation~\cite{verma2024climode}, the value preservation property $\int u(x,t) dx = \mathrm{const}, \forall t,$ is an strong inductive bias. Then the $\mathcal{R}$ can be formulated as the variance of the system's value: 
    \begin{equation}
        \begin{aligned}
            \label{eq:inductive}
            \mathcal{R}(T) = \mathrm{var}~(\int T_uu^s(x,t) dx), 
        \end{aligned}
    \end{equation}
    Experiment of this problem is shown in Sec.~\ref{sec:experiment}.
    \item For the general cases with no physical priors available, we formulate $\mathcal{R}$ as the physical relation between the marginal distributions of $P^r$: 
    \begin{equation}
        \begin{aligned}
            \label{eq:gene_reg}
            \mathcal{R}(T) = \mathcal{R}(T_k, T_u) &= m(\mathcal{G}(k^r),u^r )
            =  m(\mathcal{G}T_k(k^s),T_u (u^s) ),
        \end{aligned}
    \end{equation}
    where the $m(\cdot,\cdot)$ can be any metric on $\mathcal{D}_u$. An alternative is the $L_2$ norm in the vector space. In practice, the operator $\mathcal{G}$ can be substituted by  $\hat{\mathcal{G}^s}$ as an approximation. 
\end{itemize}
\subsection{Optimization and Analysis}
\label{sec:OandA}
Existing OTT methods~\cite{kerdoncuff2022optimal} mainly focus on the discrete case of entropy-regularized OTT and solve the optimization problem via the Sinkhorn algorithm, which is not suitable for continuous OTT with physical regularization. Motivated by neural OT methods~\cite{seguy2018large, korotin2023neural}, we explicitly fit $\mathcal{T}$ by a neural network and optimize it with the gradient of training loss. But Eq.~\eqref{eq:pott} is a constrained optimization problem and it is challenging to satisfy the constraint during the optimization process. Therefore, we introduce the Lagrange multiplier and reformulate Eq.~\eqref{eq:pott} to the unconstrained dual form.
\begin{equation}
    \label{eq:pott_dual_reg}
    \begin{aligned}
        \sup_{f} \inf_{T} \ \int_{\Omega^s} c\left((k,u),T(k, u)\right)- f(T(k,u)) + \lambda \mathcal{R}(T)\  dP^s + \int_{\Omega^t} f(k,u)dP^t .
    \end{aligned}
\end{equation}
Optimization with gradient descent tends to converge to a saddle point $(T^*, f^*)$. Following previous work~\cite{fan2023neural}, the consistency between $T^*$ and $\mathcal{T}$ is guaranteed.

\begin{theorem}[Consistency]
    Suppose the dual problem Eq.~\eqref{eq:pott_dual_reg} admits at least one saddle point solution, denoted as  $(T^*, f^*)$. Let $\mathcal{L}$ be the objective of Eq.~\eqref{eq:pott_dual_reg}. Then 
    \begin{itemize}[leftmargin=0.5cm]
        \item the dual problem Eq.~\eqref{eq:pott_dual_reg} equals to the Kantorovich problem with physical regularization in terms of total cost, i.e.~$\mathcal{L}(P^s,T^*, f^*) = K(P^s,P^t)+\mathcal{R}(T^*)$.
        \item if $T^*_{\#} P^s  = P^t $, then Eq.~\eqref{eq:pott_dual_reg} degenerates to the dual form of the primal Monge problem Eq.~\eqref{eq:monge}, $T^*$ is a Monge map, i.e.~$\mathcal{L}(P^s,T^*,f^*) = M(P^s,P^t)$.
    \end{itemize}
    \label{thm:consistency}
\end{theorem}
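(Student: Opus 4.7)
The plan is to derive both claims from the saddle-point structure of Eq.~\eqref{eq:pott_dual_reg} together with the Kantorovich--Rubinstein duality, following the neural OT framework of~\cite{fan2023neural} and adapting it to accommodate the physical regularizer.

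For part (1), at a saddle point $(T^*, f^*)$ I would treat the regularization contribution as frozen at the value $\mathcal{R}(T^*)$ and focus on the OT portion $\int [c(x, T(x)) - f(T(x))]\,dP^s + \int f\,dP^t$. Exchanging $\sup_f$ and $\inf_T$ (justified by a Sion-type min-max argument under mild continuity/convexity conditions on $c$ and on the admissible classes of $T$ and $f$), the inner infimum with $f$ fixed reduces pointwise---via a standard measurable-selection argument---to $\int f^{c}\,dP^s$, where $f^c(x) = \inf_{y \in \Omega^t}[c(x,y) - f(y)]$ is the c-transform. Maximizing $\int f^c\,dP^s + \int f\,dP^t$ over $f$ then yields $K(P^s, P^t)$ by the Kantorovich--Rubinstein theorem, which combined with the frozen regularization contribution gives $\mathcal{L}(P^s, T^*, f^*) = K(P^s, P^t) + \mathcal{R}(T^*)$.

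For part (2), the additional hypothesis $T^*_\# P^s = P^t$ makes the two potential terms cancel via the defining property of the pushforward: $\int f^*(T^*(x))\,dP^s = \int f^*(y)\,dP^t$. The objective collapses to $\int c(x, T^*(x))\,dP^s + \mathcal{R}(T^*)$. For the natural choices of $\mathcal{R}$ tied to the physical relation---most directly for Eq.~\eqref{eq:gene_reg}---matching distributions force $\mathcal{R}(T^*) = 0$, leaving $\int c(x, T^*(x))\,dP^s$. Since $T^*$ satisfies the primal Monge constraint, it is feasible in Eq.~\eqref{eq:monge}; combining with part (1) and the elementary fact that $K(P^s, P^t) = M(P^s, P^t)$ whenever a deterministic optimal coupling exists shows both $\mathcal{L}(P^s, T^*, f^*) = M(P^s, P^t)$ and that $T^*$ realizes the Monge optimum. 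In this regime Eq.~\eqref{eq:pott_dual_reg} indeed degenerates to the Lagrangian dual of the primal Monge problem Eq.~\eqref{eq:monge}.

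The main obstacle is the rigorous handling of the min-max swap and the pointwise reduction of the inner infimum when $\mathcal{R}$ is a functional of $T$ rather than a pointwise integrand (e.g.\ the variance form of Eq.~\eqref{eq:inductive}). The c-transform argument is clean for the OT piece, but $\mathcal{R}$ generally couples the inner problem; I would need a first-order stationarity argument at $T^*$ to decouple its contribution into the additive $\mathcal{R}(T^*)$ term that appears in the statement. Secondary technical issues include verifying sufficient regularity of $c$ and of the admissible function classes for both the measurable-selection step and the Kantorovich--Rubinstein theorem, as well as confirming that the hypothesized saddle point actually exists---an assumption already granted by the theorem.
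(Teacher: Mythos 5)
Your proposal follows essentially the same route as the paper's proof: part (1) via the c-transform reduction of the inner infimum and Kantorovich duality, and part (2) via the pushforward cancellation of the potential terms, the vanishing of $\mathcal{R}$ when $T^*_\#P^s = P^t$, and a weak-duality comparison showing $T^*$ attains the Monge optimum. The one bookkeeping difference---you freeze the regularizer additively at $\mathcal{R}(T^*)$ while the paper folds $\lambda\mathcal{R}(T)$ into a modified c-transform $f^{c,-}$---is immaterial, and the difficulty you flag about $\mathcal{R}$ being a global functional of $T$ rather than a pointwise integrand is genuine: the paper's own proof places $\mathcal{R}(T)$ inside a pointwise supremum over $(\xi^r,\zeta^r)$ and glosses over exactly this issue.
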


The proof of Thm~\ref{thm:consistency} can be found in Appendix~\ref{app:theory}. Theoretically, if the Monge map exists, i.e. $P^r = P^t $, then $P^r$ automatically admits the physics contained in $P^t $. The solution of the Monge problem is the desired OT map. However, as mentioned in Sec.~\ref{sec:form_POTT}, in most cases the Monge map does not exists. Therefore, the saddle point $T^*$ is not an optimal solution of the physics-regularized Monge problem. In this situation, Thm~\ref{thm:consistency} states that Eq.~\eqref{eq:pott_dual_reg} equals to physics-regularized Kantorovich problem in terms of total cost. Thus, the learned $T^*$  can be regarded as a compromise solution between the Monge problem and the Kantorovich problem. Importantly, the physical regularization encourages physics preservation during the training process of $T$, which is crucial for the approximation of $P^t $. 

In practice, we parametrize the map $T$, dual multiplier $f$ and the operator $\mathcal{G}$ by neural networks $T_\theta$, $f_\phi$ and $\mathcal{G}_\eta $ with parameters denoted by $\theta$, $\phi$ and $\eta$. Function variables $k$ and $u$ are discretized into vectors. The overall objective of POTT method is  
\begin{equation}
    \label{eq:objective}
    \begin{aligned}
        \max_{\phi}\min_{\theta} &\sum_{i=1}^{n^s} c((k^s_i,u^s_i),T_\theta(k^s_i,u^s_i))  -f_{\phi}(T_{\theta}(k^s_i,u^s_i)) + \lambda \mathcal{R}(T_{\theta})  + \sum_{j=1}^{n^t}f_{\phi}(k_j^t,u_j^t)  \\ 
        \min_{\eta }  & \sum_{j=1}^{n^t} \hat{\mathcal{L}}_\mathrm{task} (\hat{\mathcal{G}^t_{\eta }}(k^t_j),u^t_j) + \beta \sum_{i=1}^{n^s}\hat{\mathcal{L}}_{task} (\hat{\mathcal{G}^t_{\eta }}(k^r_i),u^r_i),  \\ 
    \end{aligned}
\end{equation}
where $k^r_i = T_{\theta_k}(k^s_i), u^r_i = T_{\theta_u}(u^s_i)$. $\hat{\mathcal{L}}_\mathrm{task}$ is the task specific loss. $\lambda$ and $\beta$ are hyper-parameters. Physical regularization term $\mathcal{R}$ depends on the available physical priors, as discussed in Sec.~\ref{sec:form_POTT}. A form of algorithm and the discussion of the  computational cost are provided in Appendix~\ref{app:exp}.

\begin{table*}[t]
    \vspace{-0.2cm}
    \caption{Evaluation results of Burgers' equations.}
    \begin{center}
    \begin{small}
    \begin{sc}
    \begin{tabular}{l|cc|cc|cc|cc}
    \toprule
    & \multicolumn{2}{c|}{$\mathcal{D}_1 \to \mathcal{D}_2$} & \multicolumn{2}{c|}{$\mathcal{D}_1 \to \mathcal{D}_3$} & \multicolumn{2}{c|}{$\mathcal{D}_3 \to \mathcal{D}_2$ } & \multicolumn{2}{c}{average} \\ 
    Method        & 50 & 100 & 50 & 100 & 50 & 100 & 50 & 100 \\ 
    \midrule
    Src+Tgt & 0.3960 & 0.3982 & 0.3486 & 0.3350 & 0.3145 & 0.2930 & 0.3530 & 0.3421 \\ 
    Finetuning    & 0.2001 & 0.1191 & 0.1049 & 0.0801 & 0.1546 & 0.0938 & 0.1532 & 0.0977 \\ 
    \midrule
    TL-DeepONet   & 0.1623 & 0.1182 & 0.1275 & 0.1127 & 0.1763 & 0.1436 & 0.1554 & 0.1248 \\ 
    DARE-GRAM     & 0.1727 & 0.1145 & 0.1241 & 0.1099 & 0.1752 & 0.1393 & 0.1573 & 0.1212 \\ 
    COD           & 0.1713 & 0.1225 & 0.1288 & 0.1105 & 0.1818 & 0.1525 & 0.1606 & 0.1285 \\ 
    \midrule
    \multirow{2}{*}{POTT}        & \textbf{0.1528} & \textbf{0.0965} & \textbf{0.0950} & \textbf{0.0705} & \textbf{0.1249} & \textbf{0.0757} & \textbf{0.1242} & \textbf{0.0809} \\ 
    & \tiny$\pm 0.016$ & \tiny$\pm 0.012$ & \tiny$\pm 0.007$ & \tiny$\pm 0.008$ & \tiny$\pm 0.015$ & \tiny$\pm 0.019$ & \tiny$\pm 0.013$ & \tiny$\pm 0.013$ \\
    \bottomrule
    \end{tabular}
    \label{tab:burgers}
    \end{sc}
    \end{small}
    \end{center}
    \vspace{-0.4cm}
\end{table*}
\begin{table*}[t]
    \caption{Evaluation results of Darcy flow. }
    \label{tab:darcy}
    \begin{center}
    \begin{small}
    \begin{sc}
    \begin{tabular}{l|cccccccc}
    \toprule
    Task & \multicolumn{2}{|c}{$\mathcal{D}_2 \to \mathcal{D}_1$} & \multicolumn{2}{c}{$\mathcal{D}_1 \to \mathcal{D}_3$} & \multicolumn{2}{c}{$\mathcal{D}_2 \to \mathcal{D}_3$} & \multicolumn{2}{c}{Adverge}\\ 
    Method & 50 & 100  & 50 & 100  &  50 & 100  &  50 & 100   \\
    \midrule
    Src+Tgt & 0.7113 & 0.7600 & 0.1581 & 0.1409 & 0.3535 & 0.2381 & 0.4076 & 0.3797 \\ 
    Finetuning    & 0.1426 & 0.0869 & 0.1556 & 0.1605 & 0.4693 & 0.3553 & 0.2558 & 0.2009 \\ 
    \midrule
    TL-DeepONet & 0.1410 & 0.0805 & 0.1539 & 0.1481 & 0.4514 & 0.2842 & 0.2488 & 0.1709 \\ 
    DARE-GRAM   & 0.1395 & 0.0805 & 0.1533 & 0.1441 & 0.4509 & 0.2842 & 0.2479 & 0.1696 \\ 
    COD         & 0.1367 & 0.0794 & 0.1527 & 0.1481 & 0.4437 & 0.2836 & 0.2444 & 0.1704 \\ 
    \midrule
    \multirow{2}{*}{POTT}        & \textbf{0.1362} & \textbf{0.0762} & \textbf{0.1397} & \textbf{0.1404} & \textbf{0.3527} & \textbf{0.2271} & \textbf{0.2095} & \textbf{0.1479} \\
     & \tiny$\pm 0.002$ & \tiny$\pm 0.002$ & \tiny$\pm 0.009$ & \tiny$\pm 0.006$ & \tiny$\pm 0.025$ & \tiny$\pm 0.019$ & \tiny$\pm 0.012$ & \tiny$\pm 0.009$ \\

    \bottomrule
    \end{tabular}
    \end{sc}
    \end{small}
    \end{center}
    \vskip -0.2in
\end{table*}
\section{Experiment}
\label{sec:experiment}
\textbf{Benchmarks.}
Experiments are conducted on both simulation and real-world datasets. For simulation datasets, three representative equations are contained. For real-world datasets, we consider the cross-region climate forecasting task. Implementation details are provided in Appendix~\ref{app:exp}.
\begin{itemize}[leftmargin=0.5cm]
    \item \textbf{Simulation dataset. }Following previous works~\cite{li2021fourier, lu2022comprehensive}, we take the 1-d Burgers' equation, 1-d space-time Advection equation, and 2-d Darcy Flow problem as our benchmarks. A brief introduction of these DEs problems  can be found in Tab.~\ref{tab:samples}. To simulate the domain shift, three different sub-domains for each equation, denoted as $\mathcal{D}_1$, $\mathcal{D}_2$ and $\mathcal{D}_3$, are generated. We generate 1000 training samples for each domain of Burgers' equation and 2000 samples for Advection equation and Darcy Flow. To fully investigate the effectiveness of transfer learning methods, we consider two scenarios that only 50 and 100 target data samples are available for model transfer. For all transfer tasks, we use 10 extra target domain samples for validation and 100 for testing.  Note that the necessity of transfer learning depends on the extent of domain shift. For the transfer tasks with minor domain shift, source model can generalize well and additional transfer learning methods are unnecessary. Therefore, we only evaluate these methods in some hard tasks that need more proactive transfer. Details about data generation and how the domain shift between sub-domains is considered are provided in Tab.~\ref{tab:gen_burgers}, \ref{tab:gen_darcy},\ref{tab:gen_advection}, and Tab.~\ref{tab:diff_burgers}, \ref{tab:diff_darcy}, \ref{tab:diff_advection} in Appendix~\ref{app:simulation}.
    \item \textbf{Real-world dataset. }To better assess the potential of POTT in cross-domain application of data-driven methods, we evaluate POTT in the cross-region climate forecasting task. The  preprocessed $5.625^\circ$ resolution ERA5 dataset from  WeatherBench~\cite{rasp2020weatherbench} is used for evaluation and the SOTA ClimODE~\cite{verma2024climode} is used as backbone model. Following~\cite{verma2024climode}, we consider 5 quantities as label variables: ground temperature (t2m), atmospheric temperature (t), geopotential (z), and ground wind vector (u10,v10). We use climate data on North America as source domain and the global climate data as target domain. For source domain, we use ten years of training data (2006-15). For target domain, only one year (2015) of training data is available, while data of 2016 is used for validation and data of 2017-18 is used as testing data. More details are provided in Appendix~\ref{app:exp}.
\end{itemize}

\textbf{Comparision methods}
As discussed in Sec.~\ref{sec:method_ang}, analytic methods are hard to apply in general data-driven DEs methods. So we compare POTT with finetuning and DA methods, including 
\textbf{Finetuning} on source pretrained model; \textbf{Src+Tgt}, training from scratch with source and target data; \textbf{TL-DeepONet}~\cite{goswami2022deep}, a representative DA method proposed for DEs problems; \textbf{DARE-GRAM}~\cite{nejjar2023dare} and \textbf{COD}~\cite{yang2025cod}, two SOTA DA methods proposed for DA regression problem with continuous variables. Since DARE-GRAM and COD are unsupervised DAR methods proposed for tasks in computer vision, we add the supervised target loss to them for fairness. 

\begin{table*}[t]
    \vspace{-0.2cm}
    \caption{Evaluation results of Advection equations.  }
    
    % \vskip -0.1in
    \begin{center}
    \begin{small}
    \begin{sc}
    \begin{tabular}{l|cc|cc|cc|cc}
    \toprule
    & \multicolumn{2}{c|}{$\mathcal{D}_1 \to \mathcal{D}_2$} & \multicolumn{2}{c|}{$\mathcal{D}_2 \to \mathcal{D}_1$} & \multicolumn{2}{c|}{$\mathcal{D}_3 \to \mathcal{D}_2$ } & \multicolumn{2}{c}{average} \\ 
    Method        & 50 & 100 & 50 & 100 & 50 & 100 & 50 & 100 \\ 
    \midrule
    Src+Tgt & 0.2347 & 0.1400 & 0.7299 & 0.4041 & 0.3969 & 0.1574 & 0.4538 & 0.2338 \\ 
    Finetuning    & 0.0247 & 0.0143 & 0.2193 & 0.0891 & 0.1257 & 0.0723 & 0.1532 & 0.0977 \\ 
    \midrule
    TL-DeepONet   & 0.0587 & 0.0127 & 0.2365 & 0.1047 & 0.1534 & 0.0685 & 0.1495 & 0.0620 \\ 
    DARE-GRAM     & 0.0572 & 0.0121 & 0.2227 & 0.0805 & 0.1687 & 0.0700 & 0.1495 & 0.0542 \\ 
    COD           & 0.0530 & 0.0120 & 0.2252 & \textbf{0.0785} & 0.1593 & 0.0644 & 0.1458 & 0.0516 \\ 
    \midrule
    \multirow{2}{*}{POTT}  & \textbf{0.0207} & \textbf{0.0112} & \textbf{0.1872} & 0.0787 & \textbf{0.1016} & \textbf{0.0613} & \textbf{0.1032} & \textbf{0.0504} \\ 
    & \tiny$\pm 0.004$ & \tiny$\pm 0.003$ & \tiny$\pm 0.026$ & \tiny$\pm 0.017$ & \tiny$\pm 0.015$ & \tiny$\pm 0.007$ & \tiny$\pm 0.015$ & \tiny$\pm 0.009$ \\
    \bottomrule
    \end{tabular}
    \end{sc}
    \end{small}
    \end{center}
    \label{tab:advection}
    \vspace{-0.3cm}
\end{table*}
\begin{figure*}[t]
    \begin{center}
    \centerline{\includegraphics[width=\linewidth]{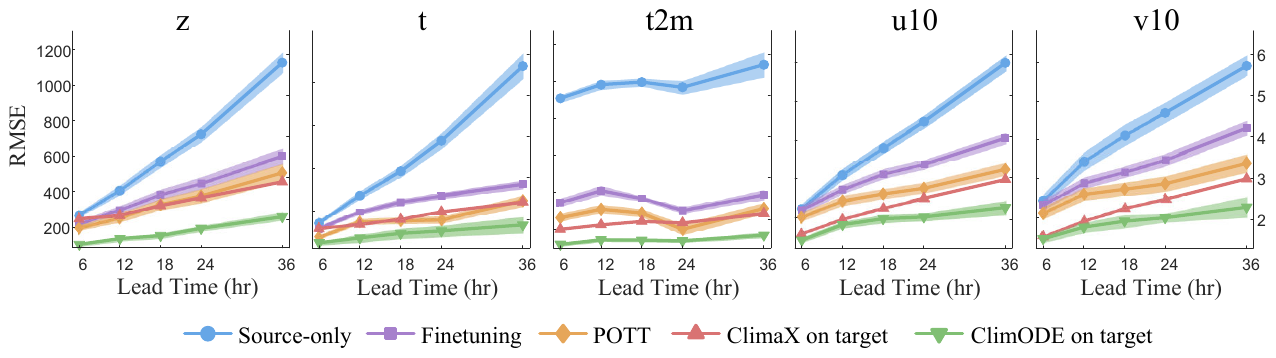}}
    \end{center}
    \vspace{-0.7cm}
    \caption{ Comparision with (1) Source-only, source pretrained model, (2) Finetuning on target domain, (3) ClimaX, a SOTA climate forecasting model,  trained with sufficient target data, (4) ClimODE, the backbone model, trained with sufficient target data. The light area reflects the standard deviation of RMSE $(\downarrow )$. Details are provided in Tab.~\ref{tab:climate}.} 
    \label{fig:Clime}
    \vspace{-0.3cm}
\end{figure*}
\textbf{Evaluation results. }For simulation datasets, average rMSE of three times repeated experiments are reported. For climate forecasting, the mean and standard deviation of latitude-weighted RMSE is reported. Details about the evaluation metrics are shown in Appendix~\ref{app:exp}.

\begin{itemize}[leftmargin=0.5cm]
    \item \textbf{Simulation datasets. }As shown in Tab.~\ref{tab:darcy}, POTT outperform finetuning and DA methods in most tasks. In tasks with severe domain shift such as $\mathcal{D}_2 \to \mathcal{D}_3$ of Darcy flow, the performances of existing methods are not satisfactory, while POTT reduces the relative error by nearly 25\% (from 0.4693 to 0.3527) with only 50 target smaples available, and reduces the relative error by 36.08\% (from 0.3553 to 0.2271) with 100 target samples. In simpler tasks $\mathcal{D}_2 \to \mathcal{D}_1$ and $\mathcal{D}_1 \to \mathcal{D}_3$, POTT still reduce the relative error compared to finetuning and DAR methods in every task. Results of Burgers' equation and Advection equation are provided in Tab.~\ref{tab:burgers} and Tab.~\ref{tab:advection}.
    \item \textbf{Climate forecasting. }We only compare POTT with finetuning method because the DA methods are not easily applicable for the model architecture of ClimODE. As shown in Fig.~\ref{fig:Clime}, directly applying ClimODE trained on North America to global forecasting results in distinct prediction error. The model performances are improved by finetuning, and POTT further reduce the prediction error, approaching the ClimODE model trained with sufficient target data. Note that the prediction error of POTT is quite closed to ClimaX model trained with sufficient target data, which is also a SOTA model in climate forecasting. Such experiment demonstrates the potential of POTT as a general model transfer method for DEs problems and real-world applications.
\end{itemize}

\begin{figure*}[t]
    \vspace{-0.1cm}
    \begin{center}
    \centerline{\includegraphics[width=\linewidth]{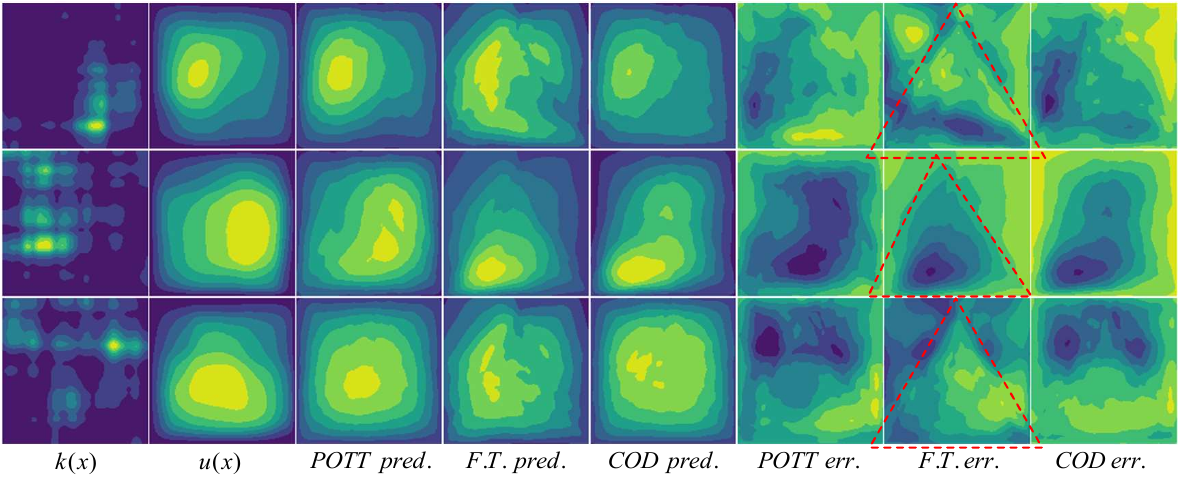}}
    \end{center}
    \vspace{-0.7cm}
    \caption{Visualization of $\hat{\mathcal{G}}_t $ on Darcy Flow. The value at each image pixel represents the function value at the sampling point. Brighter colors (yellow) indicate higher values. Columns 1-2 show the input-output function pairs from the target domain. Columns 3-5 show the output of POTT, finetuning (F.T.) and COD. Columns 6-8 display the prediction errors relative to the ground truth $u(x)$.} 
    \label{fig:G}
    \vspace{-0.4cm}
\end{figure*}
\textbf{Visualization analysis of prediction. }
Fig.~\ref{fig:G} illustrates the outputs and error maps for the target sample predicted by POTT, finetuning, and COD on the Darcy $\mathcal{D}_3 \to \mathcal{D}_2$ task with 100 target samples. 
\textbf{(1)}
As shown in columns $3-5$, despite only 100 target samples are available for training, the shape and the variation trend of outputs predicted by POTT are consistent with the ground truth. In contrast, predictions of finetuning indicate that the transferred model fails to learn the right shape and variation trend. Predictions of COD are globally consistent with ground truth, but the transferred model fails to correctly predict the large value areas. 
\textbf{(2)}
In the error maps shown in columns $6-8$, the bright areas in the error maps  of POTT are the smallest among the three methods. Especially, the error maps of finetuning shown in 7th column clearly exhibit the characteristics of source domain distribution, i.e., the distinct triangular patterns, supporting the discussions in Sec.~\ref{sec:method_ang}.

\begin{wrapfigure}{r}{0.5\textwidth}
    \centering
    \vspace{-0.35cm}
    \includegraphics[width=\linewidth]{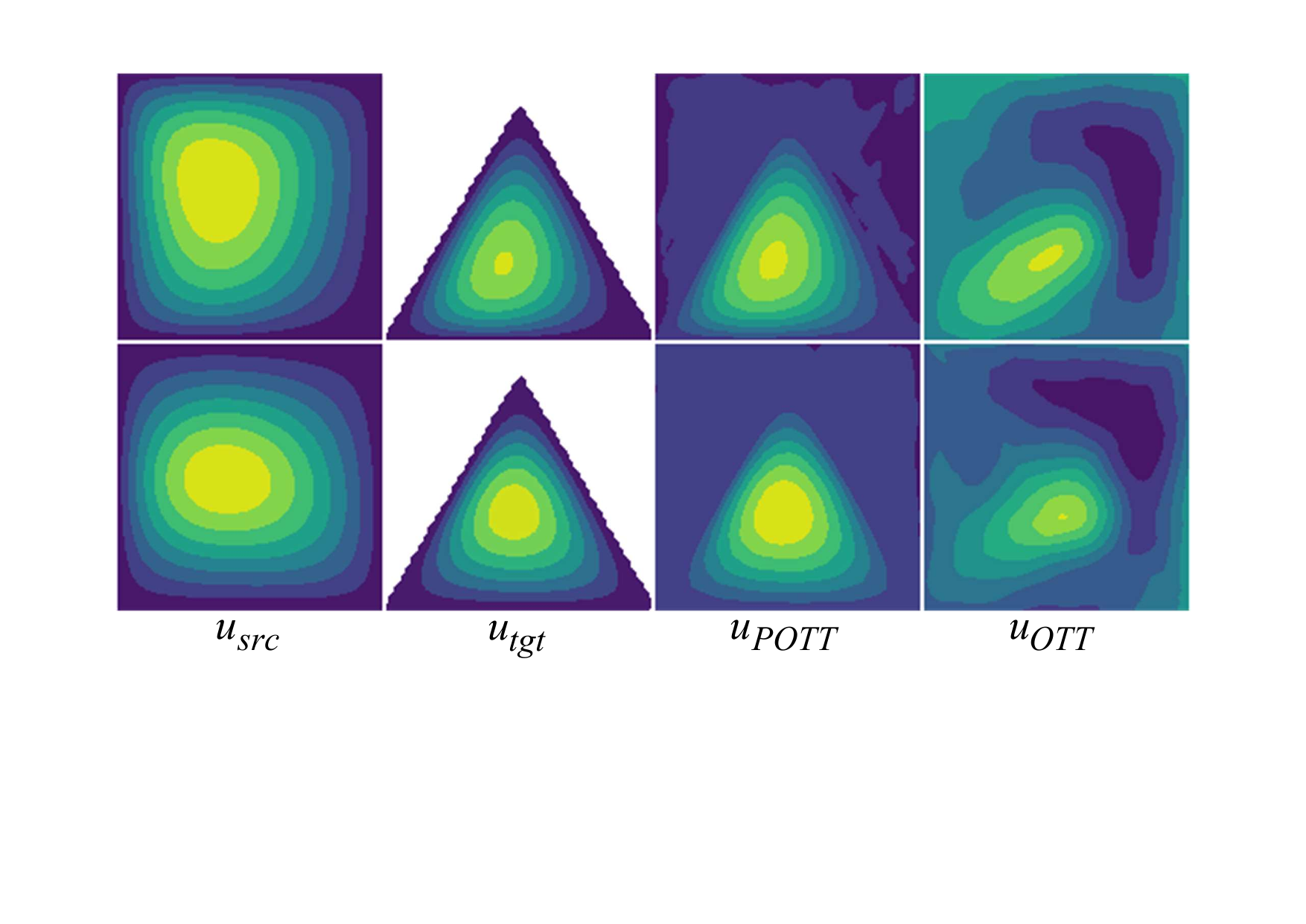} 
    \vspace{-0.4cm}
    \caption{Visualization of POTT and OTT. }
    \label{fig:OT}
    \vspace{-0.2cm}
\end{wrapfigure}
\textbf{Ablation analysis of physical regularization. }
To investigate the effect of the physical regularization in Eq.~\eqref{eq:pott}, we implement an ablation analysis on task $\mathcal{D}_1 \to \mathcal{D}_2$ of Darcy flow. As shown in Fig.~\ref{fig:OT}, outputs of OTT are roughly shaped like the ground truth $u_{tgt}$ in the large value area, but the triangular structure is not preserved, indicating the loss of some physics relations. In contrast, with physical regularization, the outputs of POTT exhibit consistency of both the large value area and the triangular structure, verifying the preservation of physics. 

\section{Conclusion}
\label{sec:conclusion}
In this work, we studied the domain shift issue in DEs problems. The essential problem is  modeled as distribution bias and operator bias. Then we detailedly analyzed existing transfer learning methods used in DEs problems and propose a feasible POTT method that simultaneously admits generalizability to common DEs and physics preservation of specific problem. Based on the the availability of physical prior, two forms of realization of are introduced to encourage the POTT map to characterize target distribution in a physics-preserved way. A dual optimization problem and the consistency property are formulated to explicitly solve the optimal map. Extensive evaluation and analysis experiments on both simulation and real-world datasets with varying difficulties validate the effectiveness of POTT for data-driven methods in DEs problems. 

\textbf{Limitations and future works. }
(1) The core of POTT lies in characterizing target distribution with limited samples.  However, similar to other distribution-based methods, POTT's efficacy diminishes when the quantity of available target data is extremely small, rendering its potential in one-shot or few-shot scenarios. Moreover, when abundant target data are accessible, finetuning or even training from scratch is sufficient for cross-domain application. Therefore, POTT is more valuable when the amount of target sample is small, yet not extremely small, as shown in Sec.~\ref{sec:experiment}.   
(2) The learning of POTT relies on the min-max optimization, which is complex and costly for models with large-scale parameters. The resolution of the experimental data is relatively low, thereby necessitating a less complex model. However, in DEs problems and application scenarios that require high resolution, a larger model is needed to fit POTT.
To address this issue, one idea is to leverage the well-developed multimodal large models to reduce the training load of the method, which will further enhance the practicality and broad applicability of this method. We believe this will contribute to the generalizability and universality of the scientific and technological achievements, such as the cross-regional climate forecasting tried in this paper.

\bibliographystyle{plain}
\bibliography{pott-ref.bib}

%%%%%%%%%%%%%%%%%%%%%%%%%%%%%%%%%%%%%%%%%%%%%%%%%%%%%%%%%%%%

%%%%%%%%%%%%%%%%%%%%%%%%%%%%%%%%%%%%%%%%%%%%%%%%%%%%%%%%%%%%%%%%%%%%%%%%%%%%%%%
%%%%%%%%%%%%%%%%%%%%%%%%%%%%%%%%%%%%%%%%%%%%%%%%%%%%%%%%%%%%%%%%%%%%%%%%%%%%%%%
% APPENDIX
%%%%%%%%%%%%%%%%%%%%%%%%%%%%%%%%%%%%%%%%%%%%%%%%%%%%%%%%%%%%%%%%%%%%%%%%%%%%%%%
%%%%%%%%%%%%%%%%%%%%%%%%%%%%%%%%%%%%%%%%%%%%%%%%%%%%%%%%%%%%%%%%%%%%%%%%%%%%%%%
\newpage
\appendix
\onecolumn
\section{Notations}
\label{app:notations}
The notations appear in this paper are summarized as follows:

\begin{table}[htbp]
    \caption{Notations.}
    \label{tab:Notation}
    % \vskip 0.15in
    \begin{center}
    \begin{small}
    \begin{tabular}{|cc|cc|}
    \toprule
    Symbols      & Meaning   \\ 
    \midrule
    $\mathcal{D}=\mathcal{D}_k \times \mathcal{D}_u = \{(k,u)\}$  & General function set \\ 
    $\Omega = \Omega_k \times \Omega_u$ & Domain of function   \\ 
    $ k = k(x) $  & function defined on $\Omega_k$  \\ 
    $ u = u(x) $ & function defined on $\Omega_u$   \\ 
    $ \mathcal{F}(x;k,u) $  & Differential Equation  \\ 
    $ \mathcal{G}:\mathcal{D}_k \to \mathcal{D}_u $ & Operator map from $\mathcal{D}_k$ to $\mathcal{D}_u$    \\  
    $ P(k, u) $ & Product probability function defined on function set $\mathcal{D}$  \\ 
    $ p(k, u) $ & Product probability density function of function set $\mathcal{D}$  \\ 
    $ \mathcal{P}_{\mathcal{D}} $ & Set of probability functions defined on  $\mathcal{D}$   \\ 
    $ \mathcal{D}^s $ & Soure domain; a subset of $\mathcal{D}$ \\ 
    $ \mathcal{D}^t $ & Target domain; a subset of $\mathcal{D}$ \\ 
    $ \mathcal{G}^s $ & Operator relation on $ \mathcal{D}^s $  \\ 
    $ \mathcal{G}^t $ & Operator relation on $ \mathcal{D}^t $  \\ 
    $ P^s $ & Probability function on $\mathcal{D}^s$ \\ 
    $ P^t $ & Probability function on $\mathcal{D}^t$ \\ 
    $ T:\mathcal{D}^s \to \mathcal{D}^t $ & Function map between $\mathcal{D}^s$ and $\mathcal{D}^t$ \\ 
    $ \mathcal{T} $ & Ideal solution of optimal transport problem \\ 
    $ M(P^s,P^t) $  & Monge problem between probability $P^s$ and $P^t$  \\ 
    $ K(P^s,P^t) $ & Kantorovich problem between probability $P^s$ and $P^t$  \\ 
    $ \mathcal{R}(T)$  & Physical regularization on $T$ \\ 
    $ m(\cdot,\cdot) $ & Metric defined on $\mathcal{D}_u$ \\ 
    $ M_{phy}(P^s,P^t) $  & Monge problem with physical regularization  \\ 
    $ K_{phy}(P^s,P^t) $ & Kantorovich problem with physical regularization  \\ 
    $ c(\cdot,\cdot) $ & Cost function in OT problem \\ 
    $f $ & Lagrange multiplier  \\ 
    $ \mathcal{L} $ & Objective function of optimization problem \\ 
    $ (T^*, f^*)$  & Saddle point solution of dual problem   \\ 
    $ \hat{\mathcal{G}^s}$  & Approximated operator on $\mathcal{D}^s$   \\ 
    $ \hat{\mathcal{G}^t}$  & Approximated operator on $\mathcal{D}^t$   \\ 
    $ \hat{T} $  & Approximation of $T$  \\
    \bottomrule
    \end{tabular}
    \end{small}
    \end{center}
    \vskip -0.1in
\end{table}
we slightly abuse the notations $k$ and $u$ to represent both functions and their discretized value vectors. The superscript s or t denotes the domain. The subscript $k$ or $u$ denotes the projection of the original product space or distribution.

\section{Theory and method}
\label{app:theory}
\subsection{Derivation of dual formula}
Given POTT problem 
\begin{align}
    \label{app:pott}
    \inf_{T_{\#}P^s = P^t}&\ \int_{\mathcal{D}^s} c\left((k,u),T(k, u)\right)dP^s + \mathcal{R}(T),
\end{align}
where $T = (T_k, T_u) $, $T_k = T\vert_{\mathcal{D}_k}, T_u = T\vert_{\mathcal{D}_u}$, we reorganize it as a constrained optimization problem:
\begin{align}
    \inf_{T}&\ \int_{\mathcal{D}^s} c\left((k,u),T(k, u)\right)dP^s + \mathcal{R}(T) \\ 
    &s.t. T_{\#}P^s = P^t
\end{align}
Following the dual optimization theory, we introduce the Lagrange multiplier $f$ to construct the Lagrange function:
\begin{equation}
    \begin{aligned}
        \mathcal{L}(T,f) = \int_{\Omega_k \times \Omega_u} c\left((k,u),T(k, u)\right)dP^s + \lambda \mathcal{R}(T) + \int_{\Omega_k \times \Omega_u} f(k,u)d(P^t-T_\# P^s).
    \end{aligned}
\end{equation}
As the physical regularization regularized the pushforward distributions $P^r = T_\# P^s$, it can be considered with the source distribution. Then it comes to 
\begin{equation}
    \begin{aligned}
        \mathcal{L}(T,f) = \int_{\Omega_k \times \Omega_u} c\left((k,u),T(k, u)\right)- f(T(k,u)) + \lambda \mathcal{R}(T)  dP^s + \int_{\Omega_k \times \Omega_u} f(k,u)dP^t.
    \end{aligned}
\end{equation}
And the dual problem of Eq.~\eqref{app:pott} is 
\begin{align}
    \sup_{f} \inf_{T} \ \mathcal{L}(T,f),
\end{align}
which is exactly Eq.~\eqref{eq:pott_dual_reg}.

\subsection{Proof of Thm.~\ref{thm:consistency}}

We prove Thm.~\ref{thm:consistency} based on previous work~\cite{fan2023neural}.

\begin{theorem}[Consistency]
    \label{app:consistency}
    Suppose the dual problem Eq.~\eqref{eq:pott_dual_reg} admits at least one saddle point solution, denoted as  $(T^*, f^*)$. Let $\mathcal{L}$ be the objective of Eq.~\eqref{eq:pott_dual_reg}. Then 
    \begin{itemize}
        \item the dual problem Eq.~\eqref{eq:pott_dual_reg} equals to the Kantorovich problem with physical regularization in terms of total cost, i.e.~$\mathcal{L}(P^s,T^*, f^*) = K(P^s,P^t)+\mathcal{R}(T^*)$.
        \item if $T^*_{\#} P^s  = P^t $, then Eq.~\eqref{eq:pott_dual_reg} degenerates to the dual form of the primal Monge problem Eq.~\eqref{eq:monge}, $T^*$ is a Monge map, i.e.~$\mathcal{L}(P^s,T^*,f^*) = M(P^s,P^t)$.
    \end{itemize}
\end{theorem}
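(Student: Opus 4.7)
The plan is to reduce both claims to standard Kantorovich--Monge duality, handling the physical regularization $\mathcal{R}(T)$ as a decoupled additive contribution that evaluates to a constant at the saddle point and hence does not interfere with the c-transform machinery. I would first evaluate $\mathcal{L}$ at the saddle point, isolating a Kantorovich-type sub-problem, and then use a change-of-variables argument for the second claim.

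For the first claim, I would fix the optimal multiplier $f^{\ast}$ and rewrite
\[
\mathcal{L}(T^{\ast},f^{\ast}) = \Bigl[\,\inf_{T} \int c\bigl((k,u),T(k,u)\bigr) - f^{\ast}\bigl(T(k,u)\bigr)\,dP^{s}\,\Bigr] + \int f^{\ast}\,dP^{t} + \lambda\,\mathcal{R}(T^{\ast}).
\]
The bracketed term involves only a pointwise integrand, so under standard regularity assumptions it equals $-\!\int f^{\ast c}\,dP^{s}$, where $f^{\ast c}(x):=\sup_{y}[f^{\ast}(y)-c(x,y)]$ is the c-transform. Taking the outer $\sup$ over $f$ then recovers the Kantorovich dual, which by Kantorovich--Rockafellar strong duality equals $K(P^{s},P^{t})$. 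Therefore $\mathcal{L}(T^{\ast},f^{\ast})=K(P^{s},P^{t})+\lambda\,\mathcal{R}(T^{\ast})$, which is the stated equivalence in terms of total cost.

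For the second claim, the additional hypothesis $T^{\ast}_{\#}P^{s}=P^{t}$ allows a direct change of variables:
\[
\int f^{\ast}\bigl(T^{\ast}(k,u)\bigr)\,dP^{s} = \int f^{\ast}\,d\bigl(T^{\ast}_{\#}P^{s}\bigr) = \int f^{\ast}\,dP^{t},
\]
so the two $f^{\ast}$ terms in $\mathcal{L}$ cancel exactly. What remains is $\int c\bigl((k,u),T^{\ast}(k,u)\bigr)\,dP^{s} + \lambda\,\mathcal{R}(T^{\ast})$, which is a Monge-feasible transport cost for $T^{\ast}$ (with the regularization absorbed in the idealized equality stated by the theorem, since in this case $P^{r}=P^{t}$ admits the physics of $P^{t}$ automatically). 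Optimality at the saddle point, combined with feasibility of $T^{\ast}$ as a transport map from $P^{s}$ to $P^{t}$, then identifies this value with $M(P^{s},P^{t})$ and shows that $T^{\ast}$ is a Monge map, which is exactly the degeneration to the primal Monge dual asserted in the theorem.

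The step I expect to be the main obstacle is rigorously justifying the interchange of the infimum over maps with the pointwise infimum in the presence of the global functional $\mathcal{R}(T)$. Because $\mathcal{R}(T^{\ast})$ is a constant once we evaluate at the saddle point, the remaining $T$-dependence is pointwise and the c-transform argument applies; the only technical content is a measurable-selection argument guaranteeing that a map attaining the pointwise infimum exists. Under the customary assumptions of OT theory (Polish spaces, lower semicontinuous cost, tight marginals) this is standard, so the proof should go through cleanly once the saddle-point decomposition in the first paragraph is set up correctly.
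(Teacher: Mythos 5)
Your overall route is the same as the paper's: part (1) is handled by exchanging the infimum over maps $T$ for a pointwise supremum (a $c$-transform), identifying the resulting $\sup_f$ problem with the (regularized) Kantorovich dual, and invoking duality; part (2) starts from the change of variables $\int f^*(T^*(k,u))\,dP^s=\int f^*\,dP^t$ and the observation that $T^*_{\#}P^s=P^t$ forces $\mathcal{R}(T^*)=0$ (the paper states this explicitly "from the construction of $\mathcal{R}$"; your "absorbed in the idealized equality" is the same point). Your worry about interchanging the global infimum with the pointwise one is legitimate, but the paper's own proof is no more rigorous on this point --- it simply writes $\lambda\mathcal{R}(T)$ inside the pointwise supremum defining $f^{c,-}$ --- so I would not count that against you.

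The one genuine gap is the conclusion of part (2). After the $f^*$ terms cancel you are left with $\int c\bigl((k,u),T^*(k,u)\bigr)\,dP^s$, and you assert that "optimality at the saddle point, combined with feasibility," identifies this with $M(P^s,P^t)$. Feasibility of $T^*$ only gives the inequality $\int c\,dP^s\geq M(P^s,P^t)$; the substance of the claim is the reverse inequality, and "optimality at the saddle point" does not deliver it without an explicit argument. The paper supplies exactly this missing step: since $T^*(k^s,u^s)$ attains the supremum defining $f^{*c,-}(k^s,u^s)$ almost surely, one has
\begin{equation*}
\int_{\Omega} c\bigl((k^s,u^s),T^*(k^s,u^s)\bigr)\,dP^s=\int f^*\,dP^t-\int f^{*c,-}\,dP^s=\int\bigl(f^*(k^t,u^t)-f^{*c,-}(k^s,u^s)\bigr)\,d\pi\leq\int c\,d\pi
\end{equation*}
for every $\pi\in\Pi(P^s,P^t)$, where the last inequality is just the definition of the $c$-transform. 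Taking the infimum over $\pi$ shows the cost of $T^*$ is at most $K(P^s,P^t)$, hence at most the cost of any other feasible transport map, so $T^*$ is a Monge map and $\mathcal{L}(P^s,T^*,f^*)=M(P^s,P^t)$. You need to add this weak-duality chain; without it the second bullet of the theorem is asserted rather than proved.
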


\begin{proof}[Proof]

    \textbf{(1)} Let $k^r = T_{\theta_k}(k^s), u^r = T_{\theta_u}(u^s)$, the inner optimization problem can be formulated as 
    \begin{equation}
        \begin{aligned}
            &\inf_{T} \mathcal{L}(T, f) \\
            =& \inf_{T} \int c\left((k^s,u^s),T(k^s, u^s)\right)- f(T(k^s,u^s)) + \lambda \mathcal{R}(T)  dP^s + \int  f(k^t,u^t)dP^t \\ 
            =& - \int  \sup_{(\xi^r,\zeta^r)} \{ f(\xi^r,\zeta^r) - \left[c\left((k^s,u^s),(\xi^r,\zeta^r)\right) + \lambda \mathcal{R}(T)\right] \} dP^s + \int  f(k^t,u^t) dP^t \\
            =& \int  f(k^t,u^t) dP^t - \int  f^{c,-}(k^s,u^s) dP^s,
        \end{aligned}
    \end{equation}
    where 
    \begin{align}
        f^{c,-}(k^s,u^s) = \sup_{(\xi^r,\zeta^r)} \left( f(\xi^r,\zeta^r) - \left[c\left((k^s,u^s),(\xi^r,\zeta^r)\right) + \lambda \mathcal{R}(T) \right] \right)
    \end{align}
    is the c-transform of the physics-regularized Kantorovich dual problem. Then the optimization problem Eq.~\eqref{eq:pott_dual_reg} becomes 
    \begin{align}
        \label{app:kanto_dual}
        \sup_{f} \left[\int  f(k^t,u^t)dP^t - \int  f^{c,-}(k^s,u^s)dP^s \right], 
    \end{align}
    which is exactly the physics-regularized Kantorovich problem. 
    
    Therefore, if $(T^*, f^*)$ is the saddle point solution of Eq.~\eqref{eq:pott_dual_reg}, then $f^*$ is an optimal solution of Eq.~\eqref{app:kanto_dual}, $\mathcal{L}(P^s,T^*, f^*) = K(P^s,P^t)+\mathcal{R}(T^*)$, which verifies the first assertion of the theorem.

    \textbf{(2)} The saddle point $(T^*, f^*)$ satisfy 
    \begin{align}
        &T^*(k^s,u^s) \in argmax_{(\xi^r,\zeta^r)}  f^*(\xi^r,\zeta^r) - \left[c\left((k^s,u^s),(\xi^r,\zeta^r)\right) + \lambda \mathcal{R}(T)\right]\ a.s.\\ 
        \Longrightarrow & f^{*c,-}(k^s,u^s)  = f^*(T^*(k^s,u^s)) - \left[c\left((k^s,u^s),T^*(k^s,u^s)\right) + \lambda \mathcal{R}(T)\right]
    \end{align}
    where $f^{*c,-}(k^s,u^s) = \sup_{(\xi^r,\zeta^r)} \left( f^*(\xi^r,\zeta^r) - \left[c\left((k^s,u^s),(\xi^r,\zeta^r)\right) + \lambda \mathcal{R}(T)\right] \right)$. 
    
    With condition $T^*_{\#} P^s  = P^t $, the pushforward distribution $P^r = P^t$, then from the construction of $\mathcal{R}(T)$, we have $\mathcal{R}(T)=0$. Thus Eq.~\eqref{eq:pott_dual_reg} degenerates to 
    \begin{equation}
        \begin{aligned}
            \sup_{f} \inf_{T} \ \int_{\Omega_k \times \Omega_u} c\left((k,u),T(k, u)\right)- f(T(k,u)) dP^s + \int_{\Omega_k \times \Omega_u} f(k,u)dP^t ,
        \end{aligned}
    \end{equation}
    which is exactly the dual form of the primal Monge problem.  Then we have 
    \begin{equation}
        \begin{aligned}
            &\int_{\Omega} c\left((k^s,u^s),T^*(k^s, u^s)\right)   dP^s \\ 
            =& \int_{\Omega } f^*(T^*(k^s, u^s))dP^s - \int_{\Omega } f^{*c,-}(k^s,u^s)dP^s \\ 
            =& \int_{\Omega } f^*(k^t, u^t)dP^t - \int_{\Omega } f^{*c,-}(k^s,u^s)dP^s \\ 
            =& \int_{\Omega  \times \Omega } f^*(k^t, u^t) - f^{*c,-}(k^s,u^s)d\pi \\ 
            \leqslant & \int_{\Omega  \times \Omega } c\left((k^s,u^s),(k^t,u^t)\right)  d\pi, \ \forall \pi \in \Pi(P^s,P^t)
        \end{aligned}
    \end{equation}
    Take infimum on both sides of the inequation, we obtain 
    \begin{equation}
        \begin{aligned}
            &\inf_{T} \int_{\Omega} c\left((k^s,u^s),T^*(k^s, u^s)\right) dP^s \\ 
            \leqslant&  \inf_{\pi}  \int_{\Omega  \times \Omega } c\left((k^s,u^s),(k^t,u^t)\right) d\pi \\ 
            \leqslant&  \int_{\Omega }c\left((k^s,u^s),T(k^s, u^s)\right)  dP^s,
        \end{aligned}
    \end{equation}
    where $T$ is any map that satisfies $ (Id,T)_\#P^s = \pi \in \Pi(P^s,P^t)$. Therefore, the solution of the Monge problem exists and $T^*$ is the Monge map.  
\end{proof}

\subsection{Algorithm analysis}
\label{app:alg}

\begin{algorithm}[H]
    \SetAlgoLined
    \textbf{Input:} source data $\hat{\mathcal{D}}_s$, pretrained model $\mathcal{G}_{\eta}$, target data $\hat{\mathcal{D}}_t$\;
    Initialize $T_{\theta}, f_{\phi}$\;
    \For{$N_{11}$ steps}{
        Freeze $\phi$, update $\theta$ to minimize the first objective of Eq.~\eqref{eq:objective} for $N_{12}$ steps\;
        Freeze $\theta$, update $\phi$ to maximize the first objective of Eq.~\eqref{eq:objective} with $\theta$\;
    }\
    \For{$N_{2}$ steps}{
        Freeze $\theta$ and $\phi$, update $\eta$ to minimize the second objective in Eq.~\eqref{eq:objective}\;
    }\
    \textbf{Output:} $\mathcal{G}_{\eta}$ as approximation of $\mathcal{G}_t$.
    \caption{Optimization of POTT}
\end{algorithm}
Thus, the entire training process of POTT requires $O((N_{11}\cdot N_{12} + N_2)B(C_{\phi}+C_{\eta}+C_{\theta}))$ operations, where $C_{\phi}$, $C_{\eta}$, $C_{\theta}$ are model parameters, B is the batchsize, $N_{12}$ is typically set to 10. As the dataset size grows larger, $N_{11}$ and $N_2$ should be set larger, and the model size of $T_{\theta}$ and $f_{\phi}$ also grow. In summary, the computational efficiency of POTT is similar to other neural OT methods.

\section{Experiment details}
\label{app:exp}
\subsection{Simulation datasets}
\label{app:simulation}
The relative Mean Square Error $ rMSE = \| u_{pred} - u_{gt}\|_2^2 / {\| u_{gt}\|_2^2}$ is reported for evaluation, where $u_{gt}$ denotes the ground truth of output u.

Simulation datasets used for evaluation are generated as follows:
\subsubsection{Burgers' equation}
\label{app:burgers}
Considering the 1-D Burgers' equation on unit torus: 
\begin{align}
    u_t + u u_x = \nu u_{xx},\ x\in (0,1), t\in (0,1], 
\end{align}
we aim to learn the operator mapping the initial condition to the solution funciton at time one, i.e.~ $\mathcal{G}_{\theta}: u_0 = u(x,0) \rightarrow u(x,1)$. As shown in Tab.~\ref{tab:gen_burgers}, we differ the generation of $u_0$ and parameter $\nu$ to construct different domains:

\begin{table}[ht]
    \caption{Generation of $u_0$ and parameter settings in 1-d Burgers' equation.}
    \begin{center}
    \begin{small}
    \begin{sc}
    \begin{tabular}{c|c}
    \toprule
    Sub-domain & Description  \\ 
    \midrule
    $\mathcal{D}_1$  & $u_0 \sim \mathcal{N}(0,7^2(-\Delta + 7^2 \mathcal{I})^{-2})$, $\nu=0.01$ \\ 
    \midrule
    $\mathcal{D}_2$  & $u_0 \sim \mathcal{N}(0.2,49^2(-\Delta + 7^2 \mathcal{I})^{-2.5})$, $\nu=0.002$ \\ 
    \midrule
    $\mathcal{D}_3$  & $u_0 \sim \mathcal{N}(0.5,625^2(-\Delta + 25^2 \mathcal{I})^{-2.5})$, $\nu=0.004$  \\
    \bottomrule
    \end{tabular}
    \end{sc}
    \end{small}
    \end{center}
    \label{tab:gen_burgers}
\end{table}
The $\mathcal{N}$ denotes the normal distribution. The resolution of x-axis is 1024. 

We assess the domain shift between these domains in a rather straightforward way. Specifically, we regard the data-driven model trained with sufficient target data as \textbf{Oracle}, and compare the prediction error between the finetuning method trained with 100 target data and the Oracle. Tasks with lower gap are considered as simpler than tasks with larger gap. As shown in Tab.~\ref{tab:diff_burgers}, the prediction gap in tasks $\mathcal{D}_2 \to \mathcal{D}_1$, $\mathcal{D}_2 \to \mathcal{D}_3$, $\mathcal{D}_3 \to \mathcal{D}_1$ are smaller than the other three, which means models are easier to transfer in these tasks. So we only conduct comparision experiment in tasks $\mathcal{D}_1 \to \mathcal{D}_2$, $\mathcal{D}_1 \to \mathcal{D}_3$ and $\mathcal{D}_3 \to \mathcal{D}_2$.
\begin{table*}[ht]
    \caption{Evaluation of tasks' difficulties of Burgers' equation. }
    \begin{center}
    \begin{small}
    \begin{sc}
    \begin{tabular}{l|cccccc}
    \toprule
    Method        & $\mathcal{D}_1 \to \mathcal{D}_2$ & $\mathcal{D}_1 \to \mathcal{D}_3$ & $\mathcal{D}_2 \to \mathcal{D}_1$ & $\mathcal{D}_2 \to \mathcal{D}_3$ & $\mathcal{D}_3 \to \mathcal{D}_1$ & $\mathcal{D}_3 \to \mathcal{D}_2$ \\ 
    \midrule
    Finetuning  & 0.1191 & 0.0801 & 0.0381 & 0.0786 & 0.0252 & 0.0938  \\ 
    Oracle      & 0.0402 & 0.0403 & 0.0140 & 0.0403 & 0.0140 & 0.0402  \\ 
    \bottomrule
    \end{tabular}
    \label{tab:diff_burgers}
    \end{sc}
    \end{small}
    \end{center}
\end{table*}

As shown in Tab.~\ref{tab:burgers}, the improvement of POTT compared to finetuning is substantial. When the amount of target data is only 50, POTT reduced the relative error by 23.64\% (from 0.2001 to 0.1528) in task $\mathcal{D}_1\to \mathcal{D}_2$ and by 19.21\% (from 0.1546 to 0.1249) in task $\mathcal{D}_3\to \mathcal{D}_2$. When the amount of target data is 100, although the performance of finetuning greatly improves, POTT still largely reduces the relative error by 18.98\% and 19.30\% in task $\mathcal{D}_1\to \mathcal{D}_2$ and $\mathcal{D}_3\to \mathcal{D}_2$. In the relatively simple task $\mathcal{D}_1\to \mathcal{D}_3$, although finetuning already achieves satisfactory results, POTT can still reduce the relative error by about 10\%, while TL-DeepONet, DARE-GRAM and COD  even caused negative transfer and result in larger relative error. 
 
\subsubsection{Darcy flow}
\label{app:darcy}
The 2-d Darcy flow takes the form 
\begin{equation}
    \begin{aligned}
        \nabla \cdot (k(x) \nabla u(x)) &= 1, \ x\in [0,1]\times [0,1]\\ 
        u(x) &= 0, \ x\in \partial([0,1]\times [0,1]).
    \end{aligned}
\end{equation}
We aim to learn the operator mapping the diffusion coefficient $k(x)$ to the solution function $u(x)$, i.e.~ $\mathcal{G}_{\theta}: k(x) \rightarrow u(x)$. We use the leading 100 terms in a truncated $Karhunen-Lo\grave{e} ve$ (KL) expansion for a Gaussian process with zero mean and covariance kernel $\mathcal{K}(x)$ to generate $a(x)$, and construct different function domains by differ the kernel $\mathcal{K}(x,x')$, as shown in Tab.~\ref{tab:gen_darcy}.

\begin{table}[ht]
    \caption{Generation of $a(x)$ in Darcy flow.}
    \begin{center}
    \begin{small}
    \begin{sc}
    \begin{tabular}{c|c}
    \toprule
    Sub-domain & Description  \\ 
    \midrule
    $\mathcal{D}_1$  & $ \mathcal{K}(x,x') = exp(-\frac{\|x-x' \|^2_2 }{2}  )$, $\Omega_u = \Omega_{square}$ \\ 
    \midrule
    $\mathcal{D}_2$  & $\mathcal{K}(x,x') = exp(-\frac{\|x-x' \|^2_2 }{2}  )$, $\Omega_u = \Omega_{triangle}$ \\ 
    \midrule
    $\mathcal{D}_3$  & $\mathcal{K}(x,x') = exp(-\frac{\|x-x' \|^2_1 }{2}  )$, $\Omega_u = \Omega_{square}$ \\ 
    \bottomrule
    \end{tabular}
    \end{sc}
    \end{small}
    \end{center}
    \label{tab:gen_darcy}
\end{table}

The $\Omega_{square}$ denotes a square with vertice on $\{(0,0),(0,1),(1,0),(1,1)\}$ in $[0,1]\times [0,1]  $, $\Omega_{triangle}$ denotes a triangle with vertice on $\{(0,0),(0,1),(0.5,1)\}$ in $[0,1]\times [0,1]$. The resolution of $x\in  [0,1]\times [0,1]$ is $64\times 64$. Similarly, the difficulties of transfer tasks are shown in Tab.~\ref{tab:diff_darcy}. As discussed in Sec.~\ref{sec:experiment}, we conduct experiments in the hard tasks $\mathcal{D}_2 \to \mathcal{D}_1$,  $\mathcal{D}_1 \to \mathcal{D}_3$ and $\mathcal{D}_2 \to \mathcal{D}_3$. Results are provided in Tab.~\ref{tab:darcy}.
\begin{table*}[ht]
    \caption{Evaluation of tasks' difficulties of Darcy flow. }
    \begin{center}
    \begin{small}
    \begin{sc}
    \begin{tabular}{l|cccccc}
    \toprule
    Method        & $\mathcal{D}_1 \to \mathcal{D}_2$ & $\mathcal{D}_1 \to \mathcal{D}_3$ & $\mathcal{D}_2 \to \mathcal{D}_1$ & $\mathcal{D}_2 \to \mathcal{D}_3$ & $\mathcal{D}_3 \to \mathcal{D}_1$ & $\mathcal{D}_3 \to \mathcal{D}_2$ \\ 
    \midrule
    Finetuning  & 0.0429 & 0.1605 & 0.0869 & 0.3553 & 0.0469 & 0.0598  \\ 
    Oracle      & 0.0111 & 0.0682 & 0.0153 & 0.0682 & 0.0153 & 0.0111  \\ 
    \bottomrule
    \end{tabular}
    \label{tab:diff_darcy}
    \end{sc}
    \end{small}
    \end{center}
\end{table*}

\subsubsection{Advection equation}
\label{app:advection}
The Advection equation takes the form 
\begin{align}
    u_t + \nu u_x = 0,\ x\in (0,1), t\in (0,1].
\end{align}
We aim to learn the operator mapping the initial condition to the solution funciton at a continuous time set $[0,1]$, i.e.~ $\mathcal{G}_{\theta}: u_0 = u(x,0) \rightarrow u(x,t)$. As shown in Tab.~\ref{tab:gen_advection}, we differ the function types and generation process of $u_0$ and parameter $\nu$ to construct different domains:

\begin{table}[ht]
    \caption{Generation of $u_0$ and parameter settings in Advection equation.}
    \begin{center}
    \begin{small}
    \begin{sc}
    \begin{tabular}{c|c}
    \toprule
    Sub-domain & Description  \\ 
    \midrule
    $\mathcal{D}_1$  & $u_0(x) = ax^2 + bx + c$, \ \ $a,b,c\in \mathcal{U}(-1,1), \nu=3$ \\ 
    \midrule
    $\mathcal{D}_2$  & $u_0(x) = ax^3 + bx^2 + cx + d$, \ \ $a\in \mathcal{U}(0,1), b,c\in \mathcal{U}(-0.5,0.5), d=0.5, \nu=2$ \\ 
    \midrule
    $\mathcal{D}_3$  & $u_0(x) = asin(bx+c) $, \ \ $a\in \mathcal{U}(0,1), b\in \mathcal{U}(5,10), c\in \mathcal{U}(-1,1), \nu=1$  \\ 
    \bottomrule
    \end{tabular}
    \end{sc}
    \end{small}
    \end{center}
    \label{tab:gen_advection}
\end{table}

The $\mathcal{U}$ denotes the uniform distribution. The resolution of x-axis and t-axis are 100 and 50, respectively. Similarly, the difficulties of transfer tasks are shown in Tab.~\ref{tab:diff_advection}. For this equation, we select three easier tasks to explore the effectiveness of POTT in easy tasks, i.e., the $\mathcal{D}_1 \to \mathcal{D}_2$, $\mathcal{D}_2 \to \mathcal{D}_1$ and $\mathcal{D}_3 \to \mathcal{D}_2$. As shown in Tab.~\ref{tab:advection}, although the performance of finetuning is satisfying, POTT can further reduce the prediction error to a lower level. 
\begin{table*}[ht]
    \caption{Evaluation of tasks' difficulties of Advection equation. }
    \begin{center}
    \begin{small}
    \begin{sc}
    \begin{tabular}{l|cccccc}
    \toprule
    Method        & $\mathcal{D}_1 \to \mathcal{D}_2$ & $\mathcal{D}_1 \to \mathcal{D}_3$ & $\mathcal{D}_2 \to \mathcal{D}_1$ & $\mathcal{D}_2 \to \mathcal{D}_3$ & $\mathcal{D}_3 \to \mathcal{D}_1$ & $\mathcal{D}_3 \to \mathcal{D}_2$ \\ 
    \midrule
    Finetuning  & 0.0143 & 0.0795 & 0.0891 & 0.0777 & 0.2226 & 0.0723  \\ 
    Oracle      & 0.0123 & 0.0134 & 0.0252 & 0.0134 & 0.0252 & 0.0153  \\ 
    \bottomrule
    \end{tabular}
    \label{tab:diff_advection}
    \end{sc}
    \end{small}
    \end{center}
\end{table*}

\subsection{Real-world datasets}
\label{app:climate}
We use the preprocessed version of ERA5 from WeatherBench~\cite{rasp2020weatherbench} as real-world datasets for climate forecasting task. We utilize the ClimODE~\cite{verma2024climode} as backbone model and conduct experiment based on their settings and codes, which are all available from their paper. Note that we only report the latitude-weighted RMSE for evaluation, since the Anomaly Correlation Coefficient (ACC) reported in~\cite{verma2024climode} is rather closed for most methods in comparision. The definition of the latitude-weighted RMSE is 
\[
\text{RMSE} = \frac{1}{N} \sum_{t}^{N} \sqrt{ \frac{1}{HW} \sum_{h}^{H} \sum_{w}^{W} \alpha(h) (u_{gt} - u_{pred})^2 }, 
\]
where $\alpha(h) = \cos(h) / \frac{1}{H} \sum_{h'}^{H} \cos(h')$ is the latitude weight, $u_{gt}$ and $u_{pred}$ are ground truth and model prediciton respectively. Lower latitude-weighted RMSE values indicate better model performance in capturing spatial or climate patterns. Visualized results and analysis are provided in Sec.~\ref{sec:experiment}, details results are provided in Tab.~\ref{tab:climate}.

\begin{table}[ht]
    \small
    \centering
    \caption{Latitude weighted RMSE($\downarrow$) results of global forecasting on ERA5 dataset. Oracle denotes the model trained with sufficient target data.}
    \begin{tabular}{lcccccc}
    \multirow{2}{*}{ } & Lead-Time& \multicolumn{4}{c}{ClimODE} & ClimaX \\
    & (hours) & Source-only & Finetuning & POTT & Oracle & Oracle\\
    \hline 
    \multirow{5}{*}{z} 
    & 6 & 265.71 $\pm$ 14.76 & 216.77 $\pm$ 14.58 & 196.53 $\pm$ 14.89 & 102.9 $\pm$ 9.3 & 247.5 \\
    & 12 & 405.83 $\pm$ 27.34 & 295.89 $\pm$ 22.92 & 251.4 $\pm$ 24.93 & 134.8 $\pm$ 162.3 & 265.3 \\
    & 18 & 568.39 $\pm$ 35.93 & 381.6 $\pm$ 29.34 & 325.7 $\pm$ 32.28 & 12.7 $\pm$ 14.4 & 319.8 \\
    & 24 & 723.61 $\pm$ 43.37 & 445.45 $\pm$ 35.35 & 374.68 $\pm$ 39.66 & 193.4 $\pm$ 16.3 & 364.9 \\
    & 36 & 1127.46 $\pm$ 58.67 & 598.61 $\pm$ 44.72 & 506.43 $\pm$ 51.64 & 259.6 $\pm$ 22.3 & 455.0 \\
    \hline
    \multirow{5}{*}{t} 
    & 6 & 1.81 $\pm$ 0.07 & 1.63 $\pm$ 0.07 & 1.36 $\pm$ 0.07 & 1.16 $\pm$ 0.06 & 1.64 \\
    & 12 & 2.72 $\pm$ 0.14 & 2.17 $\pm$ 0.09 & 1.85 $\pm$ 0.09 & 1.32 $\pm$ 0.13 & 1.77 \\
    & 18 & 3.51 $\pm$ 0.19 & 2.5 $\pm$ 0.1 & 1.89 $\pm$ 0.11 & 1.47 $\pm$ 0.16 & 1.93 \\
    & 24 & 4.51 $\pm$ 0.26 & 2.71 $\pm$ 0.11 & 1.91 $\pm$ 0.12 & 1.55 $\pm$ 0.18 & 2.17 \\
    & 36 & 6.96 $\pm$ 0.42 & 3.08 $\pm$ 0.14 & 2.54 $\pm$ 0.19 & 1.75 $\pm$ 0.26 & 2.49 \\
    \hline
    \multirow{5}{*}{t2m} 
    & 6 & 9.09 $\pm$ 0.21 & 3.43 $\pm$ 0.24 & 2.65 $\pm$ 0.2 & 1.21 $\pm$ 0.09 & 2.02 \\
    & 12 & 9.83 $\pm$ 0.24 & 4.09 $\pm$ 0.25 & 3.1 $\pm$ 0.21 & 1.45 $\pm$ 0.10 & 2.26 \\
    & 18 & 9.97 $\pm$ 0.21 & 3.66 $\pm$ 0.14 & 2.89 $\pm$ 0.2 & 1.43 $\pm$ 0.09 & 2.45 \\
    & 24 & 9.69 $\pm$ 0.36 & 3.01 $\pm$ 0.17 & 2.01 $\pm$ 0.31 & 1.40 $\pm$ 0.09 & 2.37 \\
    & 36 & 10.91 $\pm$ 0.67 & 3.86 $\pm$ 0.27 & 3.13 $\pm$ 0.28 & 1.70 $\pm$ 0.15 & 2.87 \\
    \hline
    \multirow{5}{*}{u10} 
    & 6 & 2.22 $\pm$ 0.11 & 2.19 $\pm$ 0.11 & 2.01 $\pm$ 0.11 & 1.41 $\pm$ 0.07 & 1.58 \\
    & 12 & 3.09 $\pm$ 0.15 & 2.71 $\pm$ 0.12 & 2.42 $\pm$ 0.13 & 1.81 $\pm$ 0.09 & 1.96 \\
    & 18 & 3.8 $\pm$ 0.16 & 3.11 $\pm$ 0.13 & 2.6 $\pm$ 0.13 & 1.97 $\pm$ 0.11 & 2.24 \\
    & 24 & 4.48 $\pm$ 0.17 & 3.36 $\pm$ 0.13 & 2.75 $\pm$ 0.14 & 2.01 $\pm$ 0.10 & 2.49 \\
    & 36 & 6 $\pm$ 0.21 & 4.05 $\pm$ 0.15 & 3.24 $\pm$ 0.17 & 2.25 $\pm$ 0.18 & 2.98 \\
    \hline
    \multirow{5}{*}{v10} 
    & 6 & 2.45 $\pm$ 0.15 & 2.34 $\pm$ 0.13 & 2.15 $\pm$ 0.14 & 1.53 $\pm$ 0.08 & 1.60 \\
    & 12 & 3.4 $\pm$ 0.22 & 2.88 $\pm$ 0.14 & 2.6 $\pm$ 0.16 & 1.81 $\pm$ 0.12 & 1.97 \\
    & 18 & 4.03 $\pm$ 0.26 & 3.15 $\pm$ 0.14 & 2.73 $\pm$ 0.17 & 1.96 $\pm$ 0.16 & 2.26 \\
    & 24 & 4.58 $\pm$ 0.25 & 3.44 $\pm$ 0.15 & 2.86 $\pm$ 0.19 & 2.04 $\pm$ 0.10 & 2.48 \\
    & 36 & 5.72 $\pm$ 0.25 & 4.21 $\pm$ 0.18 & 3.36 $\pm$ 0.22 & 2.29 $\pm$ 0.24 & 2.98 \\
    \hline
    \end{tabular}
    \label{tab:climate}
\end{table}

\subsection{Implementation details}
\label{app:implem}
\textbf{Simulation experiment.}
To test the generalizability of POTT with different models, we employed different backbones on various datasets. On the Burgers' equation dataset, $\mathcal{G}_\eta $ is parametrized as a 1-d Fourier Neural Operator (FNO) model, $T_\theta$ is an operator network composed of two fully connected networks (FCN), and $f_\phi$ is an FCN. On the Advection equation and Darcy flow datasets, $\mathcal{G}_\eta $ adopt a 2-d DeepONet model, $T_\theta$ has a structure similar to $\mathcal{G}_\eta$, and $f_\phi$ is a convolutional neural network (CNN). We use Adam as optimizer and the learning rate is $1e-3$ for all tasks. The learning rate of the backbone of $\mathcal{G}_\eta $ is ten times smaller than the last two layers, which is a widely-used technique in transfer learning. A cosine annealing strategy is adopted for learning rate of $\mathcal{G}_\eta $. Details of model architectures and  data generation can be found in codes provided by~\cite{li2021fourier, lu2022comprehensive}.

\textbf{Climate forecasting.}
The architecture and the training details of the data-driven model $\mathcal{G}_\eta $ are consistent with the codes provided by~\cite{verma2024climode}. The architecture and training procedures of $T_\theta$ and $f_\phi$ are the same with that in the simulation experiment.

All experiments are conducted on a single 16GB NVIDIA 4080 device.

\end{document}